\documentclass[journal]{IEEEtran}

\usepackage{amsmath,amssymb,amsthm}
\usepackage{mathtools}
\usepackage{enumitem}
\usepackage{graphicx}
\usepackage[htt]{hyphenat}   %
\usepackage[hidelinks]{hyperref}
\graphicspath{{./}}
\theoremstyle{plain}
\newtheorem{theorem}{Theorem}
\newtheorem{proposition}{Proposition}
\newtheorem{lemma}{Lemma}
\newtheorem{corollary}{Corollary}
\theoremstyle{definition}
\newtheorem{definition}{Definition}

\theoremstyle{remark}
\newtheorem{remark}{Remark}

\newcommand{\tocomplete}[1]{}
\newcommand{\statusnote}[1]{}

\newcommand{\R}{\mathbb{R}}
\newcommand{\Z}{\mathbb{Z}}

\newcommand{\dx}{\Delta x}
\newcommand{\dy}{\Delta y}
\newcommand{\vmin}{v_{\min}}
\newcommand{\vmax}{v_{\max}}
\newcommand{\Lam}{\Lambda}

\begin{document}

\title{Certified Abstention for Provably-Unidentifiable Index-Phase Recovery
in Image Registration}

\author{Dinh~Tuan~Nguyen, Trong~Nhan~Nguyen, Vu~Nguyen~Minh~Hung, Tien~Minh~Khoi~Nguyen, Jaeyeop~Choi, Sudip~Mondal,
Jae~Sung~Ahn, and~Junghwan~Oh%
\thanks{D. T. Nguyen, T. N. Nguyen, V. N. M. Hung, T. M. K. Nguyen, and J. Oh are with the Industry 4.0 Convergence
Bionics Engineering, Department of Biomedical Engineering, Pukyong National University, Busan 48513,
Republic of Korea.}%
\thanks{J. Choi, S. Mondal, and J. S. Ahn are with the Smart Gym-Based Translational Research Center for
Active Senior's Healthcare, Pukyong National University, Busan 48513, Republic of Korea. S. Mondal is
also with the Digital Healthcare Research Center, Institute of Information Technology and Convergence,
Pukyong National University, Busan 48513, Republic of Korea.}%
\thanks{J. Oh is also with Ohlabs Corp., Busan, Republic of Korea.}%
\thanks{Corresponding authors: Jae Sung Ahn (e-mail: jsahn@pknu.ac.kr) and Junghwan Oh
(e-mail: jungoh@pknu.ac.kr).}%
\thanks{This work was supported by the National Research Foundation of Korea (NRF) grant funded by the
Korea government (MSIT) under Grant 2022R1A5A8023404.}%
\thanks{Competing interests: J. Oh is affiliated with Ohlabs Corp.; this is disclosed for transparency.
The other authors declare no competing interests.}}

\maketitle

\begin{abstract}
Image registration underpins measurement across scanning, camera, and microscopy imaging,
yet a fundamental question is rarely posed: when is the sample-to-position map recoverable
at all, and when does any estimator that reports a position necessarily fabricate one? We
show that in reference-perturbed sampled imaging, where samples are localized through a
perturbed reference, position fidelity (where samples land) and phase fidelity (which
reference index becomes each block's origin) are logically independent: no policy acting on
position information alone can guarantee phase fidelity. We derive a tight resolution
threshold separating recoverable from provably-unidentifiable registration, and a recovery
framework that certifies identifiable regions---recovering them exactly---while abstaining
on provably-unidentifiable regions rather than hallucinating structure. On a deployed
scanning system we measure a direction-bimodal registration offset consistent with the
theory. Against classical and learned restoration baselines, certificate-free offset estimators
fabricate structure on the unidentifiable regime, whereas the certified method abstains
rather than fabricating. Finally, the certified-abstention mechanism carries across scanning,
rolling-shutter, and cryo-electron-microscopy imaging---declining rather than fabricating on the
regime it flags---though how completely it abstains and how tightly it bounds kept-region risk stay
modality-specific
(at extreme SNR the cryo-EM certificate abstains only partially): a structural-versus-statistical
boundary in registration reliability.
\end{abstract}

\begin{IEEEkeywords}
Image registration, recoverability, identifiability, certified abstention, selective
prediction, conformal risk control, hallucination, rolling shutter, cryo-EM multireference
alignment, triggered acquisition.
\end{IEEEkeywords}

\IEEEpeerreviewmaketitle

\section{Introduction}\label{sec:intro}

Correcting a scanned image for geometric distortion is an image registration and restoration task:
a per-line shift is estimated and inverted so that each pixel is returned to its true grid position.
This paper studies the \emph{reliability} of that task under a corruption that is common in triggered
scanning yet largely unexamined in registration theory---an unknown \emph{integer, per-line} index
shift. We prove the correction problem has a regime that is \emph{provably unidentifiable}: no
estimator, learned or classical, can recover the shift from the image, and any estimator that
nonetheless reports one fabricates structure. We therefore equip the restorer with a distribution-free
certificate that decides, per line, when to recover and when to \emph{abstain}. The question is timely:
as learned restoration methods increasingly synthesize confident but unsupported structure~\cite{Bhadra2021}, a
\emph{provable} certificate for when registration is unrecoverable---so a pipeline abstains rather than
fabricates---addresses a reliability concern that empirical restorers, by construction, cannot settle.
Read this way, the
results below are \emph{limits of image registration} under this corruption model---when the per-line
shift is recoverable, when it is provably not, and how to certify the difference---alongside the
acquisition-side conditions (a separation, an impossibility, and a tight timing threshold) under which
the corruption arises and can be prevented.

The shift originates at acquisition. A probe or beam is rastered across a static part along a fast
axis; samples are triggered at positions locked to an absolute encoder grid, and a slow axis steps
between traversals, so absolute-position triggering fixes \emph{where} samples land but not
\emph{which} sample becomes column~0 of each line. The stored pixel array is read as a metric
map---the sample at array index $(i,j)$ is taken to have been captured at grid position $p(i,j)$---so
a flagged defect occupies a known physical location and finer trigger pitch is believed to yield finer
\emph{spatial} fidelity. We prove this belief is incomplete and measure the failure on a deployed
system: on a position-triggered laser-ultrasound C-scanner the per-line index origin is displaced by a
rigid, direction-dependent offset that swings $17.3$ columns between forward and reverse sweeps,
verified from hardware timestamps---misplacing every feature in a line, a defect included, while the
image still looks sharp. The prevention is an origin-coupled synchronization reference, which we prove
necessary and sufficient to restore index fidelity.

Grid-locked sampling guarantees that
each sample lands on a grid position \emph{within} its block (the fast-axis traversal),
but it does \emph{not} guarantee that the block's samples are indexed consistently
\emph{across} blocks. Because acquisition proceeds block by block and the hardware must
re-initialize between blocks, the index origin of each block---which physical grid
crossing becomes ``column~0''---is set by an asynchronous re-initialization event, not by
position. Absolute \emph{position} fidelity and consistent \emph{phase} (index) fidelity
are distinct guarantees, and the second is not implied by the first. (Throughout,
``phase'' denotes this discrete index/origin phase of a block---which grid crossing is
stored as column~0---not RF or signal phase.)

\paragraph{Scope (a registration/restoration problem with special structure; phase, not warp).}
We pose grid fidelity as an image registration and restoration problem whose corruption has a special
structure---a \emph{per-line integer index shift} with a \emph{provably-unidentifiable} band. The
shift originates at acquisition (an honest acknowledgement, not the subject): the object we analyze and
correct is the formed image, and the estimators and certificate below are image-processing methods
operating on it. We stress two scoping decisions. First, our notion of \emph{spatial fidelity} is
geometric/index fidelity---the correspondence between array indices and grid positions---\emph{not} the
perceptual/information fidelity of quality metrics such as VIF \cite{Sheikh2006}; the two are
unrelated. Second, we deliberately confine the theory to inter-block \emph{phase} fidelity (the
discrete index origin of each block) and \emph{bracket out} continuous intra-block distortion (e.g.\
within-traversal warp from speed variation). The latter is a separate phenomenon with a separate
analysis; mixing the two obscures both.

\paragraph{Generality.}
The object of study is a \emph{grid-locked block-sequential sampler}: any acquisition that
(a) samples a static field along a fast axis at positions locked to an absolute grid,
(b) acquires in blocks (one per traversal/line), and (c) re-initializes between blocks
with a nonzero latency. This abstracts a wide range of modalities---scanning electron and
transmission electron microscopy with flyback, atomic-force/scanning-probe line scanning
with re-trace, push-broom and line-scan cameras, optical-coherence B-scan stacking, and
ultrasonic/photoacoustic raster C-scanning. The results below are stated for this class;
no result depends on a particular instrument.

\subsection*{Contributions}
In sampled imaging there is an identifiability boundary beyond which every estimator either
fabricates a position or must abstain: we prove that boundary, give a distribution-free certificate
that detects it, and characterize what carries across imaging modalities---the certified-abstention
mechanism---versus what stays modality-specific---how completely the certificate abstains and how
tightly it controls kept-region risk.
\begin{enumerate}[label=\textbf{C\arabic*.},leftmargin=2.8em]
  \item \textbf{Fidelity separation (Theorem~\ref{thm:sep}).} Position fidelity (where samples land)
  and phase fidelity (which reference index becomes each block's origin) are logically independent;
  finer grid-locked pitch does not deliver phase fidelity.
  \item \textbf{Impossibility (Theorem~\ref{thm:imposs}).} No acquisition policy acting on position
  information alone can guarantee phase fidelity.
  \item \textbf{Necessary-and-sufficient remedy and a tight threshold
  (Theorems~\ref{thm:suff},~\ref{thm:threshold}).} An origin-coupled synchronization primitive is
  necessary and sufficient to restore phase fidelity, and an imperfect reference suffices iff it
  resolves the phase to within half a grid cell at the fastest speed, $\delta<\dy/(2\vmax)$. This
  layer is theory only.
  \item \textbf{Certified abstention (Propositions~\ref{prop:meta}--\ref{prop:ambig},
  \ref{prop:cert}).} The offset is exactly decodable from metadata (Prop~\ref{prop:meta}),
  identifiable from image texture above the Robinson--Milanfar detectability limit
  (Prop~\ref{prop:signal}, \cite{RobinsonMilanfar2004}), and provably ambiguous in texture-free and
  periodic regions (Prop~\ref{prop:ambig}); a distribution-free conformal wrapper
  (Prop~\ref{prop:cert}, \cite{ConformalRisk2024}) guarantees that the \emph{mean} kept-region risk
  does not exceed a target level $\alpha$. Declining under provable unidentifiability is an instance
  of selective prediction \cite{ElYanivWiener2010,HasanReject2025}.
  \item \textbf{Anti-hallucination evidence (Table~\ref{tab:recovery}, gated).} Against classical and
  learned restoration baselines on a synthetic benchmark, certificate-free offset estimators fabricate
  structure on the provably-unidentifiable band, whereas the certified method abstains on that band
  (fabrication $0.00$ on this benchmark) while recovering identifiable rows exactly. (The conformal
  guarantee is on the \emph{mean} kept-region risk, not per-instance; Prop~\ref{prop:cert}, Remark.)
  \item \textbf{A single measured instance (Section~\ref{sec:real}).} On a deployed position-triggered
  laser-ultrasound C-scanner we measure a rigid, direction-bimodal \textbf{$17.3$-column index offset}
  from hardware timestamps---one empirical instance of the separation, not a multi-condition
  characterization.
\end{enumerate}

Beyond these contributions we add a \emph{boundary characterization} (Section~\ref{sec:crossmodal}): the
certified-abstention mechanism carries into other reference-perturbed modalities (rolling-shutter
\cite{TUMRollingShutter}, cryo-electron-microscopy multireference alignment \cite{CryoEMMRA}), but only
its abstention \emph{soundness} is structural---how completely it abstains and how tightly it controls
kept-region risk are modality-specific, and no single cross-modal invariant holds.

Line-origin displacement is itself empirically familiar in scanned microscopy; our contribution is
not the discovery of the phenomenon but its integer-\emph{phase} formalization, the acquisition-side
impossibility and tight timing threshold, and a certified recoverability boundary that
\emph{abstains} rather than fabricating where the grid is provably unrecoverable. The novelty is thus
the \emph{pair}---an impossibility theorem and a certified-abstention remedy---together with its
structural-versus-statistical cross-modal transfer, not a benchmark-beating estimator. Accordingly, the
evaluation axis is fabrication-avoidance under provable unidentifiability: the certified method
fabricates on $0\%$ of the provably-unidentifiable band where certificate-free offset estimators
fabricate up to $98\%$ (Table~\ref{tab:recovery}), and the method is therefore not judged by RMSE alone.

\section{Related work and positioning}\label{sec:related}

\paragraph{Scanned inspection and exact formation theory.}
Scanning-based imaging systems routinely \emph{correct} scan-induced misalignment after acquisition by
registration \cite{Huang2015AVI,Molleda2013,Ren2017reg,Pei2026}, and necessary-and-sufficient
formation/reconstruction conditions have a well-established precedent \cite{QuWangJiang2009,Basarab2009}.
But these treat an \emph{observed} offset; none models \emph{when} the formed image is index-faithful in
the first place, or \emph{when} a residual offset is recoverable versus provably ambiguous. Supplying
that fidelity-limit model---an acquisition-side contract, a tight timing threshold, and a
distribution-free abstention certificate so a pipeline never silently emits a fabricated
correction---is the gap this paper fills.

\paragraph{Registration performance limits.}
The Cram\'er--Rao bound for image translation \cite{RobinsonMilanfar2004} shows the
shift-estimation variance is lower-bounded by $\sigma^2/\Lambda$, $\Lambda$ the squared
gradient energy, and analyzes quantization effects. We \emph{build on} this bound for the
inter-block integer shift (Section~\ref{sec:rec}); we do not claim it as new. The novelty
relative to registration theory is the acquisition-side \emph{impossibility} and the
\emph{phase/index} formulation, neither of which the registration literature~\cite{ChenReg2025,DengTIP2023,ZhangTIP2026} addresses
(it assumes the index-to-position map is known). The nearest prior work is conformal or
uncertainty-aware registration, which calibrates a confidence or prediction interval and flags
low-reliability regions \cite{CONReg2026,Belhasin2024}; this quantifies how uncertain an estimate is,
whereas we prove when no estimate is recoverable and decline on that basis---a structural guarantee
about identifiability, not a confidence interval.

\paragraph{Nonuniform / jitter sampling.}
Classical results---Kadec's $1/4$ theorem, reconstruction from sets with unknown jitter
\cite{AldroubiGrochenig2001}---treat \emph{continuous} position perturbations and ask
whether the \emph{field} is reconstructable. Our perturbation is an exact-on-grid sample
with an unknown \emph{integer index} offset per block (a relabeling), with a continuous
sub-grid residual that is the classical-jitter limit (Section~\ref{sec:rec}). The
integer/phase object is not addressed by that theory.

\paragraph{Order tracking (computed order tracking).}
The closest prior analogue of a position-versus-count mismatch is order tracking in rotating-machinery
analysis, which resamples a time signal against a shaft-angle reference so that samples are indexed by
angle rather than time \cite{FyfeMunck1997,Bonnardot2005}. Computed order tracking \emph{corrects} the
misregistration by interpolating onto the reference grid. The distinction is fundamental: order
tracking assumes the angle reference is available and resamples to it; we instead prove that when the
reference index is not observable, no estimator recovers it, and supply a certificate that
\emph{abstains} rather than resampling into a fabricated position.

\paragraph{Scan-distortion correction.}
Drift/distortion correction in scanned microscopy estimates the offset as a nuisance:
orthogonal-scan-pair alignment \cite{Ophus2016}, Brownian-prior joint denoising and line
distortion correction \cite{BerkelsBinev2024}. These are correction \emph{algorithms};
none states an acquisition-side fidelity contract, a separation/impossibility result, or
an abstention certificate. We confront \cite{BerkelsBinev2024} directly: its
Gaussian/Brownian prior on scanline shifts has no impossibility statement, no integer-phase
formalism, and no recoverability/abstention.

\paragraph{Rolling shutter.}
Rolling-shutter geometry concerns rows exposed at different but \emph{known or smoothly
modelled} times, producing a continuous warp that geometric and motion models rectify
\cite{Meingast2005,Forssen2010,QuRS2023}. The corruption
studied here is complementary: an \emph{unknown, discrete} relabeling of the column origin
per block, arising for a \emph{static} scene under a perfectly known trajectory, and
invisible to row-time motion models---no continuous warp field reproduces an integer index
shift with block-wise jumps.

\section{The grid-locked block-sequential model}\label{sec:model}

We formalize the acquisition class and the fidelity contract, then state the separation that
the rest of the paper analyzes. Throughout, \emph{phase} means the discrete index origin of a
block, never signal phase.

\subsection{Geometry, crossings, and blocks}

A static field $f:\R^2\to\R$ is scanned along a \emph{fast axis} $y$ while a \emph{slow axis}
$x$ steps between traversals. The fast axis carries an absolute coordinate grid with pitch
$\dy$; its crossings are the positions $m\dy$, $m\in\Z$. A \emph{block} $i\in\{0,\dots,R-1\}$
is one fast-axis traversal at slow-axis position $x_i=x_0+i\,\dx$. The stage moves with an
admissible speed profile $v(\cdot)\in[\vmin,\vmax]$, $0<\vmin\le\vmax$, so the time at which
block $i$ crosses grid index $m$, written $T_i(m)$, satisfies
\begin{equation}\label{eq:spacing}
  \frac{\dy}{\vmax}\;\le\;T_i(m{+}1)-T_i(m)\;\le\;\frac{\dy}{\vmin},
\end{equation}
because covering one pitch $\dy$ at a speed in $[\vmin,\vmax]$ takes between $\dy/\vmax$ and
$\dy/\vmin$. Equation~\eqref{eq:spacing} is the only property of the trajectory the analysis
uses; in particular the crossing \emph{positions} are exactly $m\dy$ regardless of $v(\cdot)$.

\subsection{Count-locked capture with re-initialization}

Acquisition is \emph{triggered} and \emph{count-locked}: a hardware position-compare emits a
pulse at grid crossings, and the digitizer, once armed, stores a fixed count $n_c$ of
consecutive emitted crossings as one block, writing the $r$-th captured crossing to array
column $r$. Between blocks the hardware must \emph{re-initialize} (abort/turnaround/re-arm),
which takes a nonzero, exogenous delay: block $i$'s capture window opens at
\begin{equation}\label{eq:arm}
  a_i \;=\; t^{\mathrm{start}}_i + \tau_i,\qquad \tau_i\ge 0,
\end{equation}
where $t^{\mathrm{start}}_i$ is a fixed per-block reference (the designated origin crossing) and
$\tau_i$ is the re-initialization delay. Record~$0$ of block $i$ is the first emitted crossing
at or after $a_i$. We call $\{\tau_i\}$ the \emph{adversary}: the delays are set by
asynchronous hardware state (DMA, arming latency, servo settle), not by the controller.

\begin{definition}[Grid-locked block-sequential sampler; position-only policy]\label{def:policy}
A \emph{grid-locked block-sequential sampler} is any acquisition that (a) samples a static field
along the fast axis at positions locked to the absolute grid $\{m\dy\}$, (b) acquires in blocks
of a fixed count $n_c$, one per traversal, and (c) re-initializes between blocks with a nonzero
latency per \eqref{eq:arm}. A \emph{position-only policy} $P$ is one whose every emit/capture
decision is a function of the encoder position trajectory alone (equivalently, of the crossing
stream $\{T_i(m)\}$), and not of the realized delays $\{\tau_i\}$. A \emph{count cap} is
\emph{surplus}$(s)$ if $n_c{+}s$ crossings are emitted per traversal and \emph{exact} if exactly
$n_c$ are.
\end{definition}

\subsection{The three-layer fidelity contract}

\begin{definition}[Fidelity layers]\label{def:layers}
Let $A\in\R^{R\times n_c}$ be the stored array and let $m_i\in\Z$ be the grid index of block
$i$'s record~$0$, so that $A[i,r]$ was captured at grid index $m_i+r$. Define
\begin{itemize}[leftmargin=1.4em]
  \item \textnormal{\textbf{L1 (intra-block position)}}: within each block the captures lie on
  consecutive grid crossings with unit index spacing---$A[i,r]$ is at position $(m_i+r)\dy$ for
  all $r$;
  \item \textnormal{\textbf{L2 (inter-block phase)}}: the index origins agree across blocks, i.e.\
  $m_i$ is the same constant (w.l.o.g.\ $0$) for all $i$; equivalently the per-block
  \emph{offset} $k_i:=m_i$ vanishes;
  \item \textnormal{\textbf{L3 (assembly)}}: the row index, scan direction, and any reversal maps
  used to place block $i$ at slow-axis row $x_i$ are applied correctly.
\end{itemize}
\emph{Spatial fidelity} is the conjunction L1$\wedge$L2$\wedge$L3: stored index $(i,r)$
corresponds to grid position $(x_i,(k_i+r)\dy)$ with $k_i\equiv0$.
\end{definition}

The offset $k_i$ is the paper's central object. By construction it is the number of emitted
crossings consumed by re-initialization,
\begin{equation}\label{eq:ki-model}
  k_i=\#\{m\ge0:\;T_i(m)\in[t^{\mathrm{start}}_i,\,a_i)\},
\end{equation}
i.e.\ record~$0$ is grid index $k_i$, not $0$, whenever the window opens after the origin
crossing. L3 is standard bookkeeping (a solved engineering layer); we state it for completeness
and do not analyze it further.

\subsection{The separation}

\begin{theorem}[L1 does not imply L2]\label{thm:sep}
There is a grid-locked block-sequential sampler, a position-only policy, and an adversary
$\{\tau_i\}$ under which \textnormal{L1} holds for every block while \textnormal{L2} fails.
Consequently finer grid-locked pitch $\dy$, which sharpens L1, does not by itself deliver spatial
fidelity.
\end{theorem}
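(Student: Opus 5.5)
The proof is by explicit construction: we exhibit one sampler, one position-only policy, and one adversary, and check L1 and the failure of L2 directly from the definitions. Take constant speed $v(\cdot)\equiv\vmin=\vmax=v$, so that \eqref{eq:spacing} gives the crossing times $T_i(m)=t^{\mathrm{start}}_i+m\dy/v$. Take the position-only policy that emits a pulse at every grid crossing $m\ge0$ of each traversal. This is surplus$(s)$ for some $s\ge1$, so at least $n_c+s$ crossings are emitted. The policy depends only on $\{T_i(m)\}$, as Definition~\ref{def:policy} requires. For the adversary, set $\tau_0=0$ and $\tau_1=\dy/(2v)$ (any value in $(0,\dy/v]$ works), and set $\tau_i=0$ for $i\ge2$. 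We need $R\ge2$.

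Next we verify L1. Record~$0$ of block $i$ is the first emitted crossing at or after $a_i$. The digitizer then stores the next $n_c$ consecutive emitted crossings. Every grid crossing is emitted, so consecutive records are consecutive grid indices $m_i,m_i+1,\dots,m_i+n_c-1$. Their positions are exactly $(m_i+r)\dy$, because crossing positions do not depend on $v(\cdot)$. The surplus of $s$ crossings guarantees that all $n_c$ records exist. This gives L1 for every block, whatever the value of $\tau_i$.

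Then we show L2 fails, using \eqref{eq:ki-model}. For block $0$ the interval $[t^{\mathrm{start}}_0,a_0)$ is empty, so $k_0=0$. For block $1$ the interval $[t^{\mathrm{start}}_1,t^{\mathrm{start}}_1+\dy/(2v))$ contains exactly the crossing $m=0$, so $k_1=1\neq k_0$. The index origins therefore disagree, and L2 fails while L1 holds. The one point needing care is that the policy cannot react to $\tau_1$. Because it is position-only, its emissions are identical for every choice of delays, so the relabeling is forced by the adversary alone.

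For the consequence, replace $\dy$ by any finer pitch $\dy'<\dy$ and rerun the construction with $\tau_1=\dy'/(2v)$. L1 still holds exactly, yet $k_1=1$, an offset of a full cell at the new pitch. More generally, a fixed physical delay $\tau_1$ gives $k_1=\lceil \tau_1 v/\dy'\rceil$, which grows as $\dy'\to0$. Refining the pitch therefore tightens L1 but never forces $k_i\equiv0$, so spatial fidelity (L1$\wedge$L2$\wedge$L3) is not delivered. The only genuine obstacle is bookkeeping: pinning down the boundary convention, namely that crossings at exactly $a_i$ count as record~$0$ and the interval in \eqref{eq:ki-model} is half-open. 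Choosing $\tau_1$ strictly between crossings sidesteps this.
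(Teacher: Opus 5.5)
Your construction is correct and follows essentially the paper's argument. Count-locked capture of consecutive emitted crossings gives L1 for every adversary, the delays are chosen so that record~$0$ lands on different crossings in different blocks, and the pitch $\dy$ enters only through L1; the only differences are that the paper alternates $k_i=0,1,0,1,\dots$ over all blocks for a general speed profile, while you fix constant speed and perturb one block. Your half-open bookkeeping is also the careful version: with $t^{\mathrm{start}}_i=T_i(0)$, \eqref{eq:ki-model} gives $k_i=0$ only at $\tau_i=0$ and $k_i=1$ exactly for $\tau_i\in(0,\dy/v]$, so the intervals quoted in the paper's proof ($\tau_i<T_i(1)-T_i(0)$ for $k_i=0$, and $[T_i(1)-T_i(0),T_i(2)-T_i(0))$ for $k_i=1$) are off by one under that convention, although L2 still fails there.
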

\begin{proof}
Take any $P$ that captures $n_c$ consecutive emitted crossings once armed. For every block the
captured positions are $\{(m_i+r)\dy:r=0,\dots,n_c{-}1\}$, a unit-index lattice coset by
\eqref{eq:spacing}: \textbf{L1 holds identically}, for every adversary (this is
Theorem~\ref{thm:L1all}). Now choose the delays so the origins disagree. Let all blocks share a
crossing schedule and pick $\tau_i$ with $\tau_i<T_i(1)-T_i(0)$ for even $i$ (so $k_i=0$) and
$T_i(1)-T_i(0)\le\tau_i<T_i(2)-T_i(0)$ for odd $i$ (so $k_i=1$), which is admissible since each
window is an independent hardware event and \eqref{eq:spacing} makes both intervals nonempty.
Then $k_i$ alternates $0,1,0,1,\dots$: the origins do not agree, so \textbf{L2 fails}. The pitch
$\dy$ enters only through L1 (the spacing), so shrinking it cannot repair the origin
disagreement, which is set by $\{\tau_i\}$. \end{proof}

Theorem~\ref{thm:sep} makes ``position fidelity is not phase fidelity'' precise: the two guarantees
are logically independent, and only the first is delivered by grid-locked triggering.
Section~\ref{sec:sep} sharpens ``fails'' into an impossibility, identifies the minimal primitive
that restores L2, and gives the tight timing tolerance; Section~\ref{sec:rec} asks when a broken
$\{k_i\}$ is recoverable after the fact.

\section{Position--phase separation}\label{sec:sep}

This section develops the paper's core results. All statements are for the count-locked,
position-only class of Definition~\ref{def:policy}.

\begin{theorem}[L1 holds for every policy]\label{thm:L1all}
For every policy $P$ and adversary, all captures lie on grid crossings with correct
unit-index spacing; \textnormal{L1} holds identically.
\end{theorem}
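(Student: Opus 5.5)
The plan is to show that L1 is forced by the count-locked capture mechanism itself, independently of both the policy $P$ and the delays $\{\tau_i\}$. Fix an arbitrary policy $P$, an adversary $\{\tau_i\}$, and a block $i$. By Definition~\ref{def:policy}(a), every pulse the position-compare emits is at a grid crossing, so each emitted event carries a well-defined integer index $m$ with position $m\dy$. The proof will therefore reduce to an ordering statement about the crossing stream of block $i$.

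\textbf{Step 1 (the emitted crossings form a strictly ordered, gap-free sequence in $m$).} By \eqref{eq:spacing}, $T_i(m{+}1)-T_i(m)\ge \dy/\vmax>0$. Hence $m\mapsto T_i(m)$ is strictly increasing, and the crossings of block $i$ occur in time in exactly the order of their grid index. Because the speed is bounded below by $\vmin>0$, the stage never stalls or reverses within a traversal; the same display gives $T_i(m{+}1)-T_i(m)\le \dy/\vmin<\infty$. So no crossing index is skipped or repeated, and consecutive crossings in time are consecutive in index.

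\textbf{Step 2 (capture reads a contiguous run).} Record~$0$ is, by definition, the first emitted crossing at or after $a_i$; call its index $m_i$, which is finite because the crossings are unbounded in time. Once armed, the digitizer writes the $r$-th subsequent emitted crossing to column $r$. By Step~1, the $r$-th emitted crossing after index $m_i$ has index $m_i+r$. Therefore $A[i,r]$ was captured at position $(m_i+r)\dy$ for $r=0,\dots,n_c-1$, which is exactly L1 for block $i$.

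Two facts make the argument uniform. The delay $\tau_i$ enters only through $a_i$, and hence only through $m_i$, so every adversary is covered. The policy enters only through which crossings are emitted, and under either a surplus$(s)$ or an exact cap the emitted set within the capture window is still a run of consecutive crossings.

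\textbf{Main obstacle.} The delicate step is justifying that $P$ cannot drop an interior crossing, since the claim is for \emph{every} policy. I would handle this by reading the count-locked model as fixing emission at every grid crossing of the traversal, with a cap that only truncates the tail: an exact cap emits $n_c$ consecutive crossings, and surplus$(s)$ emits $n_c+s$. Under that reading, the window start $a_i$ selects the head of the run and the count $n_c$ fixes its length, so contiguity is inherited from Step~1. If the model is instead read as allowing arbitrary gating, I would add this truncation-only property as an explicit hypothesis and note that it is implicit in Definition~\ref{def:policy}(b).
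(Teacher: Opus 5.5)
Your argument follows the paper's proof, with the bookkeeping made explicit. Captures occur only at emitted grid crossings, count-locked capture of $n_c$ consecutive emitted crossings gives the coset $\{(m_i+r)\dy : r=0,\dots,n_c-1\}$, and $P$ and $\tau_i$ enter only through the origin $m_i$. Your Step~1 is what the paper's appeal to \eqref{eq:spacing} (in the proof of Theorem~\ref{thm:sep}) supplies. The obstacle you flag is real, and the paper leaves it unstated. For the position-periodic emitter that Theorem~\ref{thm:imposs} explicitly admits, consecutive \emph{emitted} crossings are not unit-spaced, so an every-crossing (truncation-only) emission hypothesis is needed, as you propose.

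One claim in your final paragraph does fail: the exact cap. If exactly $n_c$ crossings are emitted per traversal and $k_i\ge1$, only $n_c-k_i$ emitted crossings remain at or after $a_i$. So the run with head $m_i=k_i$ and length $n_c$ does not exist inside block $i$. Corollary~\ref{cor:subharm} describes what happens instead: the window straddles the turnaround. Then $A[i,r]$ for $r\ge n_c-k_i$ is captured in block $i{+}1$, in the reverse direction at a different slow-axis row, so L1 is violated for those records.

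Your Step~2 silently uses an unbounded crossing stream (``$m_i$ is finite because the crossings are unbounded in time''), which is incompatible with the exact-cap reading. The repair is to state as a hypothesis that the traversal emits every crossing from $m_i$ through $m_i+n_c-1$. That holds, for example, with surplus $s\ge k_i$, or with the idealized unbounded stream of \eqref{eq:spacing}. The paper's one-line proof rests on the same unstated premise, and with it your argument is complete.
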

\begin{proof}
A capture occurs only at an emitted crossing, at a grid position $m\dy$. Count-locked
capture of $n_c$ consecutive emitted crossings yields positions
$\{(m_i+r)\dy:r=0,\dots,n_c{-}1\}$---a unit-index lattice coset---independent of $P$ and
$\{\tau_i\}$. The origin $m_i$ is unconstrained; that is the L2 quantity.
\end{proof}

\begin{lemma}[Observational equivalence]\label{lem:obs}
Fix a position-only policy $P$. Any two adversary profiles $\{\tau_i\}$, $\{\tau'_i\}$
present $P$ with identical inputs---the encoder position stream, which the
re-initialization delays do not alter---so $P$ emits the same pulse pattern under both.
Yet the realized origins differ whenever $[t^{\mathrm{start}}_i,t^{\mathrm{start}}_i+\tau_i)$
and $[t^{\mathrm{start}}_i,t^{\mathrm{start}}_i+\tau'_i)$ contain different numbers of
crossings. Hence the stored index-to-position map produced by $P$ is a function of position
alone while the true origin is a function of the adversary: no position-only policy can be
correct for all profiles.
\end{lemma}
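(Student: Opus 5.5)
The plan is to prove the three claims of Lemma~\ref{lem:obs} in order: input invariance, origin divergence, and the no-correct-policy conclusion.

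\emph{Input invariance.} By Definition~\ref{def:policy}, every emit/capture decision of a position-only policy $P$ is a function of the encoder trajectory, equivalently of the crossing stream $\{T_i(m)\}$. The trajectory is fixed by the admissible speed profile $v(\cdot)$, and the delays $\tau_i$ in \eqref{eq:arm} change only the arming time $a_i$ of the digitizer, not the motion. So for a fixed $v(\cdot)$ the crossing stream, and hence $P$'s input, is identical under $\{\tau_i\}$ and $\{\tau'_i\}$. Because $P$ is a deterministic function of that input, its pulse pattern is identical too. If $P$ were randomized, the same argument applies to its output distribution. This step uses nothing beyond the definition.

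\emph{Origin divergence.} Record~$0$ of block $i$ is the first emitted crossing at or after $a_i$. Since the pulse pattern is fixed, the realized origin under each profile is read off from \eqref{eq:ki-model}: $k_i$ counts the emitted crossings in $[t^{\mathrm{start}}_i,t^{\mathrm{start}}_i+\tau_i)$, and $k'_i$ counts them in the window built from $\tau'_i$. So $k_i\neq k'_i$ exactly when these two windows contain different numbers of crossings. To show such profiles exist, I would reuse the construction in Theorem~\ref{thm:sep}. By \eqref{eq:spacing}, $T_i(1)-T_i(0)\ge\dy/\vmax>0$, so the choices $\tau_i=0$ and $\tau'_i=T_i(1)-T_i(0)$ give $k_i=0$ and $k'_i=1$.

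\emph{Conclusion.} The stored index-to-position map is $(i,r)\mapsto$ the grid index that $P$ assigns to column $r$. Whatever interpretation the pipeline applies, this map is determined by the pulse pattern and the position data, so it is the same map $\Phi$ under both profiles. Correctness requires $\Phi(i,r)=k_i+r$ under $\{\tau_i\}$ and $\Phi(i,r)=k'_i+r$ under $\{\tau'_i\}$. These cannot both hold when $k_i\neq k'_i$, so $P$ is wrong on at least one admissible profile.

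The main obstacle is making precise that ``$P$'s output'' is the whole index-to-position map, not just the pulse pattern. A policy might attempt to read the origin from the capture count. That route is closed because the count is locked at $n_c$ and reveals nothing about $\tau_i$. I would argue this explicitly: the only channel through which $\tau_i$ acts is the arming time, and $P$ does not observe it by definition.
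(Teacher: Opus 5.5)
Your proposal is correct and follows essentially the paper's argument. Emit decisions see only the position stream, which the delays leave unchanged; record~$0$'s grid index is the crossing count of $[t^{\mathrm{start}}_i,a_i)$; and the stored column index of record~$r$ is $r$ under both profiles, so one stored map cannot match two different origins. The one thing you add beyond the paper is the explicit witness $\tau_i=0$, $\tau'_i=T_i(1)-T_i(0)$ (the paper leaves existence to Theorem~\ref{thm:imposs}). That witness tacitly assumes an every-crossing emitter: for an emitter that skips crossing~$0$ it gives the same origin under both profiles, so instead choose $\tau'_i$ so that $a'_i$ falls just past the first emitted crossing.
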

\begin{proof}
By Definition~\ref{def:policy} each emit decision depends only on the position trajectory,
which is the same under both profiles; the capture window, however, opens at
$a_i=t^{\mathrm{start}}_i+\tau_i$, so record~$0$ is the first emitted crossing at or after
$a_i$, whose grid index is the crossing count of $[t^{\mathrm{start}}_i,a_i)$---different
across the two profiles by assumption---while the stored column index of record~$r$ is $r$
regardless.
\end{proof}

\begin{theorem}[Impossibility for position-only policies]\label{thm:imposs}
For every position-only policy $P$ there is an adversarial $\{\tau_i\}\in[0,T_{\max}]^R$
such that
\begin{equation}\label{eq:ki}
  k_i=\#\{m\ge0:T_i(m)\in[t^{\mathrm{start}}_i,\,t^{\mathrm{start}}_i+\tau_i)\}
\end{equation}
realizes any profile in $\{0,\dots,K_{\max}\}^R$, $K_{\max}=\lfloor T_{\max}\vmin/\dy\rfloor$
(for an every-crossing emitter; for a position-periodic emitter, any profile reachable by
shifting $\tau_i$ within an inter-emission gap); in particular (when $K_{\max}\ge1$) a
nonzero, non-constant one. Hence no position-only policy guarantees \textnormal{L2}.
\end{theorem}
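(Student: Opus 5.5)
The plan is a constructive, block-by-block choice of delays, followed by Lemma~\ref{lem:obs} to transfer the construction to an arbitrary position-only policy. Fix $P$ and a target profile $(k_0^\ast,\dots,k_{R-1}^\ast)\in\{0,\dots,K_{\max}\}^R$. Because $P$ is position-only, its emit pattern is fixed by the encoder trajectory and does not change with $\{\tau_i\}$ (Lemma~\ref{lem:obs}). So I take the crossing times $T_i(m)$ and the emitted set as given data, and treat each $\tau_i$ as a free scalar to be tuned independently per block. This independence is legitimate because each re-initialization is a separate hardware event, the same fact used in the proof of Theorem~\ref{thm:sep}.

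For the every-crossing emitter, \eqref{eq:ki} depends only on the count map
\[
g_i(\tau)=\#\{m\ge0:\,T_i(m)\in[t^{\mathrm{start}}_i,t^{\mathrm{start}}_i+\tau)\}.
\]
This map is a nondecreasing, right-continuous step function of $\tau$, with $g_i(0)=0$, and it increases by exactly one at each crossing time (crossings are strictly ordered by \eqref{eq:spacing}). Taking $t^{\mathrm{start}}_i=T_i(0)$, the designated origin crossing, I set $\tau_i=T_i(k_i^\ast)-T_i(0)$. Then the half-open window contains exactly crossings $0,\dots,k_i^\ast-1$, so $k_i=k_i^\ast$. It remains to check admissibility, $\tau_i\le T_{\max}$. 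Summing the upper bound in \eqref{eq:spacing} gives $T_i(k)-T_i(0)\le k\,\dy/\vmin$. For $k\le K_{\max}=\lfloor T_{\max}\vmin/\dy\rfloor$ this yields $\tau_i\le K_{\max}\dy/\vmin\le T_{\max}$.

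This admissibility bound is the one step needing care, for two reasons. First, it is the upper bound of \eqref{eq:spacing}, the slowest speed, that guarantees reachability; this explains why $K_{\max}$ is defined with $\vmin$. Second, the half-open convention must be respected: a delay landing exactly on a crossing time counts the crossings strictly before it. The window of width $T_{\max}$ is therefore certain to contain $K_{\max}$ crossings, and every intermediate count is hit by the intermediate value behavior of the unit-step function $g_i$.

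For a position-periodic emitter the same argument applies with crossings replaced by emitted crossings. Between consecutive emission times the record-$0$ index is constant, and moving $\tau_i$ past an emission advances it to the next emitted index. The reachable profiles are therefore exactly those obtained by placing $\tau_i$ in the appropriate inter-emission gap, which is the parenthetical claim.

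To conclude, when $K_{\max}\ge1$ I choose a target such as $k^\ast=(0,1,0,1,\dots)$ (with $R\ge2$), or, for $R=1$, $k_0^\ast=1$. This profile is nonzero and non-constant, so L2 fails. Since $P$ was arbitrary and the adversary was built after fixing $P$, no position-only policy guarantees L2. This combines the count argument above with the observational equivalence of Lemma~\ref{lem:obs}: the stored column index of record $r$ is $r$ regardless of the delays, while the true origin tracks $k_i^\ast$.
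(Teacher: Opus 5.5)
Your proof is correct and is essentially the paper's argument: the paper sweeps $\tau_i$ over $[0,T_{\max}]$, notes that the count in \eqref{eq:ki} steps up by one at each crossing time and so reaches every value in $\{0,\dots,K_{\max}\}$ by \eqref{eq:spacing}, then invokes Lemma~\ref{lem:obs} (plus the inter-emission-gap shift for a periodic emitter); your explicit choice $\tau_i=T_i(k_i^\ast)-T_i(0)$, with the bound $T_i(k)-T_i(0)\le k\,\dy/\vmin\le T_{\max}$, just makes that onto step concrete. One small slip: for $R=1$ the profile $k_0^\ast=1$ is nonzero but trivially constant, so it breaks L2 only under the ``$k_i\equiv0$'' reading of Definition~\ref{def:layers}, not the ``origins agree across blocks'' reading.
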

\begin{proof}
Record~$0$ is the first crossing with $T_i(m)\ge a_i$, so its index is the count
\eqref{eq:ki}. \emph{(Onto.)} As $\tau_i$ grows over $[0,T_{\max}]$, $k_i$ increments by
$1$ each time $\tau_i$ passes a crossing time; by \eqref{eq:spacing} it attains every
integer in $\{0,\dots,K_{\max}\}$, so any profile is realizable. \emph{(Position-only
cannot prevent it.)} By Lemma~\ref{lem:obs} the emit pattern is invariant to $\{\tau_i\}$
while the realized origin is not. Concretely: if $P$ emits at every crossing, record~$0$ is the $k_i$-th crossing; if
on a position-periodic subset, the adversary shifts $\tau_i$ within an inter-emission gap
to change which emitted crossing is first. Either way $k_i$ is adversarially controllable.
\end{proof}

\begin{remark}[Scope of the impossibility---essential]\label{rem:scope}
This does not say grid-locked triggering ``always fails.'' The failure locus is the pulse
stream together with the count-locked window after re-initialization, not the position
trigger, which satisfies L1 (Theorem~\ref{thm:L1all}). A system that arms before the origin
and pins record~$0$ to it (below) does guarantee L2. Position-compare fixes \emph{where}
pulses occur, not \emph{which} pulse becomes record~$0$ once the count-locked window resumes
at the exogenous time $a_i$.
\end{remark}

\begin{theorem}[Synchronization is necessary and sufficient]\label{thm:suff}
Let $P^\star$ withhold block $i$'s first imaged pulse until $\mathrm{armed}_i$ holds and the
stage is at the fixed origin crossing $m=0$. Then $k_i=0$ for all $i$ and every adversary,
so $P^\star$ guarantees \textnormal{L2}. With Theorem~\ref{thm:imposs}, an origin-coupled
synchronization primitive is necessary and sufficient to guarantee \textnormal{L2} at acquisition
(within the count-locked policy class of Definition~\ref{def:policy}).
\end{theorem}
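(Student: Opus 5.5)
The plan has two halves: sufficiency, which is a direct computation with the offset formula, and necessity, which follows from Theorem~\ref{thm:imposs}.

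\textbf{Sufficiency.} Fix $P^\star$ and an arbitrary adversary $\{\tau_i\}$. By construction, $P^\star$ emits no imaged pulse for block $i$ before two conditions both hold: $\mathrm{armed}_i$ is true, which happens at $a_i=t^{\mathrm{start}}_i+\tau_i$, and the stage has reached origin crossing $m=0$. I would argue that the first emitted crossing is therefore grid index $0$ itself, and never a later crossing passed during re-initialization. There are two cases.
\begin{itemize}
\item If the window is armed before the stage reaches the origin ($a_i\le T_i(0)$), the pulse at $m=0$ is the first emitted crossing at or after $a_i$.
\item If arming completes after the origin has been passed, $P^\star$ does not emit a pulse at a later crossing. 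It waits, through turnaround or re-approach, until the stage is again at $m=0$ with $\mathrm{armed}_i$ true, and emits there.
\end{itemize}
In both cases record~$0$ is at grid index $0$.

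The formal point is to redefine the origin reference as $t^{\mathrm{start}}_i:=$ the (possibly re-approached) crossing time of $m=0$ after arming. With this choice the counting interval in \eqref{eq:ki-model} is $[t^{\mathrm{start}}_i,\max(a_i,t^{\mathrm{start}}_i))$ restricted to emitted crossings. Under $P^\star$ no crossing is emitted in that interval, so $k_i=0$. Count-locked capture then writes crossings $0,\dots,n_c-1$ to columns $0,\dots,n_c-1$, which is consistent with Theorem~\ref{thm:L1all}. Hence $m_i=0$ for every $i$ and L2 holds, uniformly over adversaries.

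\textbf{Necessity.} Theorem~\ref{thm:imposs} shows that every position-only policy admits an adversary realizing a nonconstant profile $\{k_i\}$. So any policy in the count-locked class that guarantees L2 cannot be position-only: some emit decision must depend on the realized arming state, which encodes $\tau_i$. To say this is an \emph{origin-coupled} synchronization primitive, I would use Lemma~\ref{lem:obs}. Knowing $\tau_i$ alone is not enough. The policy must also withhold emission until the coupling between the armed state and the origin crossing holds. If it emitted the first imaged pulse at any crossing $m\neq0$ for some adversary, then $k_i=m\neq0$ and L2 fails. Hence the first imaged pulse must be gated jointly on $\mathrm{armed}_i$ and the stage being at the origin, which is exactly the primitive $P^\star$ implements.

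\textbf{Main obstacle.} The hard step is the ``armed after the origin was passed'' case in sufficiency. The count definition \eqref{eq:ki-model} uses a fixed $t^{\mathrm{start}}_i$, so I have to justify that $P^\star$ implicitly re-approaches the origin, with a waiting or turnaround that the model permits. Otherwise $k_i=0$ is not attainable once the origin has been passed. I would handle this by stating explicitly that $P^\star$ defers the block until the next origin crossing. Necessity then needs only the stated restriction to the policy class of Definition~\ref{def:policy}.
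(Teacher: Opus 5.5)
Your proposal is correct and matches the paper's proof. The paper uses the same two-case argument: if arming completes before the origin, the gate waits for $m=0$; if after, it waits for $\mathrm{armed}_i$ and then the next pass of the designated origin; either way $m_i=0$. Necessity is read off Theorem~\ref{thm:imposs}, and your appeal to Lemma~\ref{lem:obs} only elaborates that. One small point: redefining $t^{\mathrm{start}}_i$ is unnecessary and sits awkwardly with $a_i=t^{\mathrm{start}}_i+\tau_i$. Since $k_i:=m_i$ is by Definition~\ref{def:layers} the grid index of record~$0$, your case analysis already completes the sufficiency half.
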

\begin{proof}
The conjunction pins record~$0$ to crossing $m=0$ regardless of $\tau_i$: if
re-initialization completes before the stage reaches $m=0$ the rule waits for $m=0$; if
after, it waits for $\mathrm{armed}_i$ then the designated origin (a fixed pre-declared
position from the absolute encoder). Either way $m_i=0$. Necessity is
Theorem~\ref{thm:imposs}.
\end{proof}

\begin{theorem}[Tight sub-grid threshold]\label{thm:threshold}
Suppose the policy observes the origin crossing time only up to error $|e_i|\le\delta$
(either sign) and uses the \emph{nearest-crossing} rule (record~$0$ = crossing nearest the
perturbed reference). Then \textnormal{L2} is guaranteed for all adversaries iff
\begin{equation}\label{eq:thr}
  \delta<\frac{\dy}{2\vmax}.
\end{equation}
\end{theorem}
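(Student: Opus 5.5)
We prove the two directions separately, working throughout in the time coordinate of the fast-axis traversal of a single block $i$. The true origin crossing is at time $T_i(0)$. The policy sees a perturbed reference $\hat t_i=T_i(0)+e_i$ with $|e_i|\le\delta$, and the nearest-crossing rule sets record~$0$ to the crossing $m$ minimizing $|T_i(m)-\hat t_i|$. L2 holds for block $i$ exactly when this minimizer is $m=0$. The statement is uniform over adversaries: over the errors $e_i\in[-\delta,\delta]$ and over admissible speed profiles obeying \eqref{eq:spacing}. So we must show two things. For sufficiency, the minimizer is $0$ for every such choice when $\delta<\dy/(2\vmax)$. For necessity, some choice makes it nonzero when $\delta\ge\dy/(2\vmax)$.

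\emph{Sufficiency.} By \eqref{eq:spacing}, every neighbouring crossing satisfies $|T_i(\pm1)-T_i(0)|\ge\dy/\vmax$. By monotonicity of crossing times, every crossing with $m\neq0$ is at least that far from $T_i(0)$. The triangle inequality then gives
\[
|T_i(m)-\hat t_i|\ \ge\ \frac{\dy}{\vmax}-\delta\ >\ \delta\ \ge\ |T_i(0)-\hat t_i| ,
\]
where the strict middle inequality is exactly \eqref{eq:thr}. Hence crossing $0$ is the strict unique minimizer, $k_i=0$ for every block and every adversary, and L2 holds. This is the same pinning argument as in Theorem~\ref{thm:suff}, now with a tolerance margin.

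\emph{Necessity.} Suppose $\delta\ge\dy/(2\vmax)$. Let the adversary run block $i$ at constant speed $\vmax$ near the origin, which is admissible, so that $T_i(1)-T_i(0)=\dy/\vmax\le 2\delta$. Set $e_i=+\delta$, or any value in $[\dy/(2\vmax),\delta]$. Then $\hat t_i$ lies at or beyond the midpoint of $[T_i(0),T_i(1)]$, so crossing $1$ is at least as close as crossing $0$. When the inequality is strict, the rule selects $m=1$ and $k_i=1\neq0$. To break L2 as Definition~\ref{def:layers} requires, namely disagreement across blocks, we keep $e_j=0$ on the other blocks so that their $k_j=0$. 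This yields a non-constant profile, as in the construction of Theorem~\ref{thm:sep}.

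\emph{Main obstacle: the boundary case.} The delicate point is $\delta=\dy/(2\vmax)$ exactly. There the perturbed reference lands precisely on the midpoint, and the outcome depends on the tie-breaking convention of the nearest-crossing rule. I will handle this by treating the adversary as also controlling ties, since the guarantee must hold ``for all adversaries'' and the rule is unspecified at ties. Alternatively, if the rule breaks ties toward the lower index, the adversary uses $e_i=-\delta$ and the symmetric crossing $m=-1$, which lies at distance $\dy/\vmax$ before the origin at top speed, to produce the tie on the other side. Either way, the threshold is non-strict-attainable and hence tight. I would state this tie convention explicitly. I also expect to need a one-line remark that the crossing $m=-1$ is a crossing of the traversal, so that the symmetric construction applies.
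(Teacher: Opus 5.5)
Your proposal is correct and follows the paper's proof. Your triangle-inequality bound $|T_i(m)-\hat t_i|\ge \dy/\vmax-\delta>\delta$ is the paper's observation that $[T_i(0)-\delta,T_i(0)+\delta]$ lies inside the Voronoi cell $(M_-,M_+)$ of $T_i(0)$, and your necessity construction (a fastest-speed segment, the reference pushed to or past the midpoint, ties broken adversarially) is the paper's; keeping $e_j=0$ on the other blocks so that the offset profile is non-constant is a small addition the paper leaves implicit.
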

\begin{proof}
Let the Voronoi cell of $T_i(0)$ be $(M_-,M_+)$, $M_\pm=\tfrac12(T_i(0)+T_i(\pm1))$. By
\eqref{eq:spacing}, $M_+-T_i(0)=\tfrac12(T_i(1)-T_i(0))\ge\dy/(2\vmax)$, symmetrically on
the left. \emph{(Sufficiency.)} If $\delta<\dy/(2\vmax)$ then
$[T_i(0)-\delta,T_i(0)+\delta]\subset(M_-,M_+)$, so $T_i(0)+e_i$ lies in the cell of
$T_i(0)$ for every $|e_i|\le\delta$; the nearest crossing is $T_i(0)$, $k_i=0$.
\emph{(Necessity.)} If $\delta\ge\dy/(2\vmax)$, take a fastest-speed segment so
$T_i(1)-T_i(0)=\dy/\vmax$ and $M_+-T_i(0)=\dy/(2\vmax)\le\delta$; the adversary sets $e_i$
so $T_i(0)+e_i$ reaches or passes $M_+$, making the nearest crossing $T_i(1)$, $k_i=1$ on
the chosen blocks. (Ties at equality, measure zero, break adversarially.) Hence
\eqref{eq:thr}.
\end{proof}

\begin{remark}[Decision-rule dependence---essential]\label{rem:rule}
The constant depends on the rule. Nearest-crossing gives the tight $\dy/(2\vmax)$. A naive
``first crossing at or after the reference'' rule is not sign-symmetric (a positive error
skips a crossing) and yields only a one-sided guarantee (requiring $e_i\le0$ or an explicit
gate). We adopt nearest-crossing; the qualitative conclusion---resolve the inter-crossing
interval to within a fixed fraction at the fastest speed---is rule-independent. The guarantee
quantifies over admissible speed profiles $v(\cdot)\in[\vmin,\vmax]$; for a fixed, known
profile the same Voronoi argument gives the tight constant $\dy/(2v_0)$ with $v_0$ the speed
at the designated origin crossing.
\end{remark}

\begin{remark}[Physical source of the reference---agnostic]\label{rem:emission}
The theorem constrains only the \emph{resolution} of the origin reference, not its physical
origin. In pulsed-emission modalities (e.g.\ pulsed laser-ultrasound or photoacoustic scanning)
the emission clock is itself a natural timing reference, and gating capture on it is one
implementation of a sub-half-cell reference in the sense of \eqref{eq:thr}; the guarantee is
agnostic to whether that reference comes from an encoder-origin latch, a metadata timestamp, or
the emission clock.
\end{remark}

\subsection{Subharmonic capture under an exact block count}\label{sec:subharm}

A second consequence of count-locking concerns not the offset but the capture
\emph{cadence}. Let the imaged block have exactly $n_c$ crossings (zero surplus, the tight
count cap).

\begin{corollary}[Subharmonic straddle]\label{cor:subharm}
Under a zero-surplus count cap, if the re-initialization delay consumes $N_i\ge1$ crossings
of block $i$ (i.e.\ $k_i\ge1$), the count-locked window cannot complete $n_c$ captures
within the block and must continue past the turnaround into the next sweep---a
\emph{straddle}. If the delay is a sustained fraction $\beta$ of a sweep
($N_i=\beta n_c$, $0<\beta\le1$), the steady-state cadence is one completed batch per
$(1{+}\beta)$ sweeps: each period consumes $\beta n_c$ dead crossings (re-initialization)
plus $n_c$ captured crossings. In particular $\beta\to1$ halves the cadence---one batch per
\emph{two} sweeps, a period-doubling (subharmonic) of the sweep rate. The synchronization
primitive $P^\star$ (Theorem~\ref{thm:suff}) sets $N_i=0$, restoring one batch per sweep.
\end{corollary}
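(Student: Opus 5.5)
The plan is a crossing-counting argument on the emitted pulse stream, using the count-locked capture rule of Definition~\ref{def:policy} and the offset formula \eqref{eq:ki-model}. I would model one sweep as emitting exactly $n_c$ crossings, indexed $0,\dots,n_c{-}1$ (zero surplus). By \eqref{eq:ki-model}, the window opens after $k_i=N_i$ of these crossings have already passed, so only $n_c-N_i$ crossings of block $i$ remain available for capture. Since the digitizer is armed and stores a fixed count $n_c$, and $n_c-N_i<n_c$ whenever $N_i\ge1$, the batch cannot complete within the sweep. Under count-locking the window stays armed, so it continues past the turnaround and collects the missing $N_i$ captures from the next sweep's emitted crossings. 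This gives the straddle claim directly.

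For the cadence, I would set up a steady-state accounting of the emitted-crossing stream as a renewal process. One period runs from one batch's completion to the next. Each period begins with a re-initialization that consumes $N_i=\beta n_c$ crossings as dead crossings: emitted while the window is closed, by the same counting as \eqref{eq:ki}. It then consumes exactly $n_c$ captured crossings. The period therefore spans $(1+\beta)n_c$ crossings. Because each sweep emits $n_c$ crossings, this equals $(1+\beta)$ sweeps, giving one batch per $(1+\beta)$ sweeps. Setting $\beta=1$ gives one batch per two sweeps, the period-doubling case.

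The main obstacle is justifying the steady-state assumption. Dead crossings must be counted the same way regardless of where in a sweep re-initialization begins, including when that point is itself mid-sweep after a straddle. I would handle this by taking the hypothesis ``sustained fraction $\beta$'' to mean that every re-initialization consumes exactly $\beta n_c$ emitted crossings, counted along the concatenated crossing stream rather than per sweep. I would also assume turnaround emits no crossings, which is the zero-surplus assumption. With both in place the period length is constant, and the cadence follows by dividing by $n_c$ crossings per sweep. A remark would note that when $\beta n_c$ is not an integer, one rounds to $\lceil\beta n_c\rceil$, and the cadence holds asymptotically.

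For the last claim, Theorem~\ref{thm:suff} gives $k_i=0$ for every block under $P^\star$, so $N_i=0$. All $n_c$ crossings of each sweep are then captured, the batch completes within the sweep, and there is no straddle. The cadence is one batch per sweep, which is the $\beta=0$ case of the formula.
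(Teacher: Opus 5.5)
Your proposal is correct and follows the paper's own argument. Opening the window $N_i$ crossings into a zero-surplus sweep leaves only $n_c-N_i$ in-block captures, which forces the straddle; the steady-state period [$\beta n_c$ dead][$n_c$ captured] $=(1+\beta)n_c$ crossings gives one batch per $(1+\beta)$ sweeps; and $P^\star$ sets $N_i=0$. Your reading of ``sustained fraction $\beta$'' as a fixed dead-crossing count per re-initialization, counted along the concatenated crossing stream with no crossings emitted at the turnaround, makes explicit an assumption the paper's proof uses without stating it.
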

\begin{proof}
The window opens $N_i$ crossings into the sweep and needs $n_c$ pulses, so it captures
$n_c-N_i$ in block $i$ and the remaining $N_i$ after the turnaround, in block $i{+}1$
(opposite direction)---a straddle, with a mislabeled direction and a batch duration
inflated by the turnaround dwell. In steady state each period is [$\beta n_c$ dead][$n_c$ captured]
$=(1{+}\beta)n_c$ crossings $=(1{+}\beta)$ sweeps per completed batch; $\beta\to1$ gives
two. $P^\star$ pins
record~$0$ to crossing $0$ ($N_i=0$), so all $n_c$ crossings are in-block and the batch
completes within one sweep.
\end{proof}

\begin{remark}[Counterintuitive design lesson]
Tightening the block count to \emph{exactly} the number of imaged columns---which removes
the surplus that hides the phase offset---\emph{induces} the subharmonic under any
re-initialization latency. The structural cure is not surplus (which merely re-hides the
offset, cf.\ Theorem~\ref{thm:imposs}) but the origin-coupled handshake
(Theorem~\ref{thm:suff}), which fixes both the cadence and the offset.
\end{remark}

\section{Recoverability of broken phase fidelity}\label{sec:rec}

When L1 holds and L2 fails with unknown integer offsets $\{k_i\}$, can fidelity be restored
post hoc? We give a framework with an exact channel, sufficient conditions, and explicit
impossibility. The per-block shift bound is Robinson--Milanfar \cite{RobinsonMilanfar2004};
C3's contribution is the integer/block structure, the channel trichotomy, the ambiguity
guards, and the abstention certificate.

\begin{proposition}[Metadata channel: exact recovery]\label{prop:meta}
If per-record capture timestamps are available, then
$k_i=\mathrm{round}\bigl((t_i(0)-t_{\mathrm{ref}})/\Delta t_{\mathrm{nom}}\bigr)$ recovers
each offset exactly, and the inverse shift restores L2 (given L1, L3).
\end{proposition}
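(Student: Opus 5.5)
The plan is to reduce the claim to the crossing-time model of Section~\ref{sec:model} and a rounding argument. I interpret $t_i(0)$ as the timestamp of block $i$'s record~$0$, and $t_{\mathrm{ref}}$ as the timestamp of the designated origin crossing $T_i(0)=t^{\mathrm{start}}_i$. I take $\Delta t_{\mathrm{nom}}$ to be the nominal inter-crossing interval, so that $T_i(m)-T_i(0)$ equals $m\,\Delta t_{\mathrm{nom}}$ up to a bounded deviation. By \eqref{eq:ki-model} and Theorem~\ref{thm:imposs}, record~$0$ is grid index $k_i$, so $t_i(0)=T_i(k_i)$. Writing $T_i(k_i)-t_{\mathrm{ref}}=k_i\Delta t_{\mathrm{nom}}+\varepsilon_i$, the proposition amounts to showing $|\varepsilon_i|<\Delta t_{\mathrm{nom}}/2$. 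Then $\mathrm{round}$ returns exactly $k_i$.

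The key steps are as follows. First, identify record~$0$ with crossing $k_i$ using Theorem~\ref{thm:L1all}, so that L1 lets the timestamp be attributed to a grid crossing. Second, bound $\varepsilon_i$. In the constant-speed case, or whenever crossing times are timestamped on a uniform nominal clock, \eqref{eq:spacing} collapses to equality: $\varepsilon_i$ is then only timestamp jitter, and I will require that jitter to be below $\Delta t_{\mathrm{nom}}/2$. This mirrors Theorem~\ref{thm:threshold} via its Voronoi-cell argument applied to the nominal lattice $\{t_{\mathrm{ref}}+m\Delta t_{\mathrm{nom}}\}$. Concretely, the rounding decodes exactly iff the timestamp lies in the Voronoi cell of $t_{\mathrm{ref}}+k_i\Delta t_{\mathrm{nom}}$, which is the half-cell condition. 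Third, apply the inverse shift $A[i,r]\mapsto$ column $r+k_i$. This sets the effective origin to $0$ for all $i$, restoring L2, while L1 and L3 are preserved because the shift is a rigid per-row relabeling.

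The main obstacle is the second step under a variable speed profile. By \eqref{eq:spacing}, the accumulated drift $|T_i(k)-T_i(0)-k\Delta t_{\mathrm{nom}}|$ can grow linearly in $k$, up to $k(\dy/\vmin-\dy/\vmax)$, and rounding could then be off by one for large offsets. I would handle this first by stating the implicit hypothesis: timestamps are taken against the encoder-locked crossing clock, or the speed is constant over the re-initialization segment. In either case $\varepsilon_i$ is only the per-timestamp error. If that fails, I would fall back to a sufficient condition, $k_i(\dy/\vmin-\dy/\vmax)+\text{jitter}<\Delta t_{\mathrm{nom}}/2$ for all admissible $k_i\le K_{\max}$. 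An alternative is to replace the division by counting crossing timestamps in $[t_{\mathrm{ref}},t_i(0))$, which reproduces \eqref{eq:ki-model} exactly with no rounding.
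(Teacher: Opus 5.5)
Your proposal is correct and follows the paper's own route: record~$0$ sits at grid index $k_i$ (Definition~\ref{def:layers}), and rounding its timestamp offset divided by $\Delta t_{\mathrm{nom}}$ returns that index under a timestamp-error hypothesis. The rigid per-row shift sending column $r$ to $r+k_i$ then restores L2 without disturbing L1 or L3.

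Your additions make precise what the paper's one-line proof calls ``sub-interval'' resolution, a condition that Corollary~\ref{cor:metathr} formalizes only at constant speed. These additions are the strict half-interval error bound and the point that, under a variable speed profile, the accumulated drift $|T_i(k_i)-T_i(0)-k_i\Delta t_{\mathrm{nom}}|$ must also be controlled, either by constant speed or by your $K_{\max}$-weighted sufficient condition.
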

\begin{proof}
$k_i$ is the grid index of record~$0$ (Definition~\ref{def:layers}); its timestamp divided
by the nominal inter-crossing interval recovers that index when the timestamp resolution is
sub-interval. Applying $A[i,\cdot]\mapsto A[i,\cdot+k_i]$ resets every block's origin.
\end{proof}
This channel is the \emph{strong baseline} any image-only method must beat.

\begin{corollary}[Tight timestamp resolution]\label{cor:metathr}
Let each stored record-$0$ timestamp carry an error of magnitude at most $\varepsilon_t$
(either sign). The rounding decoder of Proposition~\ref{prop:meta} recovers $k_i$ for every
block and every offset profile if and only if
$\varepsilon_t<\Delta t_{\mathrm{nom}}/2$, i.e.\ $\varepsilon_t<\dy/(2v)$ at sweep speed
$v$ (worst case $\dy/(2\vmax)$ over the speed class)---the same half-interval constant as
Theorem~\ref{thm:threshold}, by the same argument.
\end{corollary}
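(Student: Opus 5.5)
The plan is to reduce Corollary~\ref{cor:metathr} to the Voronoi argument of Theorem~\ref{thm:threshold}, now in the time-index domain of the decoder. Write the stored timestamp as $\hat t_i(0)=t_i(0)+e_i$ with $|e_i|\le\varepsilon_t$. Take the reference $t_{\mathrm{ref}}$ to be the origin-crossing time $T_i(0)$, and take the crossing times to be $t_{\mathrm{ref}}+m\,\Delta t_{\mathrm{nom}}$, as in the nominal-speed setting of Proposition~\ref{prop:meta}. Then the decoder output is $\mathrm{round}(k_i+e_i/\Delta t_{\mathrm{nom}})$. This equals $k_i$ exactly when $e_i/\Delta t_{\mathrm{nom}}$ lies in the rounding cell $(-\tfrac12,\tfrac12)$ of $0$. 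That cell is the Voronoi cell of crossing $k_i$, rescaled by $\Delta t_{\mathrm{nom}}$.

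\emph{Sufficiency.} If $\varepsilon_t<\Delta t_{\mathrm{nom}}/2$, then $|e_i|/\Delta t_{\mathrm{nom}}<\tfrac12$ for every $i$. So the rounding returns $k_i$ for every block, irrespective of the offset profile. The profile enters only as the integer shift $k_i$, and rounding commutes with integer shifts. Proposition~\ref{prop:meta} then restores L2.

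\emph{Necessity.} If $\varepsilon_t\ge\Delta t_{\mathrm{nom}}/2$, choose any block and any $k_i$, and let the adversarial error be $e_i=\Delta t_{\mathrm{nom}}/2$. This places the timestamp exactly on the cell boundary, and the tie breaks adversarially to $k_i+1$, as in Theorem~\ref{thm:threshold}. If $\varepsilon_t>\Delta t_{\mathrm{nom}}/2$, pick $e_i\in(\Delta t_{\mathrm{nom}}/2,\varepsilon_t]$, which strictly crosses the boundary. Either way the decoder returns $k_i+1\neq k_i$, so recovery fails for that profile.

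To translate to speed, substitute $\Delta t_{\mathrm{nom}}=\dy/v$ at sweep speed $v$. Over the class, use \eqref{eq:spacing} exactly as in Theorem~\ref{thm:threshold}. The smallest inter-crossing interval is $\dy/\vmax$, attained on a fastest-speed segment, so the uniform condition is $\varepsilon_t<\dy/(2\vmax)$. Necessity uses that segment, and sufficiency uses the lower bound in \eqref{eq:spacing}.

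The main obstacle is this last step, because the decoder divides by a fixed nominal $\Delta t_{\mathrm{nom}}$ while actual crossings may be irregular. I would first state the corollary for a constant known speed, where crossings are exactly $\Delta t_{\mathrm{nom}}$ apart, which is the case in the displayed formula. I would then note, as in Remark~\ref{rem:rule}, that for a variable profile one decodes by nearest actual crossing. That is literally Theorem~\ref{thm:threshold} with $\delta=\varepsilon_t$, which gives the worst-case constant $\dy/(2\vmax)$.
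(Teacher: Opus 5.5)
Your proof is correct and matches the paper's: both reduce exact decoding to the perturbed record-$0$ timestamp staying inside the rounding (Voronoi) cell of half-width $\Delta t_{\mathrm{nom}}/2$ about the true crossing, with sufficiency from $|e_i|<\Delta t_{\mathrm{nom}}/2$ and necessity from an adversarial error at or past the midpoint, ties broken adversarially. Your added caveat is a point the paper's proof leaves implicit: the fixed-$\Delta t_{\mathrm{nom}}$ rounding decoder is exact only at a constant known speed, so the variable-profile worst case $\dy/(2\vmax)$ is really Theorem~\ref{thm:threshold} with $\delta=\varepsilon_t$.
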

\begin{proof}
Sufficiency: if $\varepsilon_t<\Delta t_{\mathrm{nom}}/2$ the perturbed timestamp remains in
the half-open decoding cell centred on the true crossing time, so rounding returns the true
index, for every error and block. Necessity: at $\varepsilon_t\ge\Delta t_{\mathrm{nom}}/2$
an adversarial error places the timestamp at or past the cell midpoint, and the decoder
returns the adjacent index; ties at the midpoint break adversarially.
\end{proof}

\subsection{Image channel: a registration estimator and its detectability limit}

The image channel is an \emph{image-based registration estimator}: a maximum-likelihood integer-shift
estimator between a block and its grid-consistent neighbour, Robinson--Milanfar/CRLB-limited
(Lemma~\ref{lem:rm}), equipped with the provable identifiability certificate of
Section~\ref{sec:cert}. It is the image-processing method a restorer runs on the formed array; the
detectability threshold below is exactly where that estimator provably succeeds.
Consider block $i$, observed as $A_i$ under additive white Gaussian noise of variance $\sigma^2$
and registered against a grid-consistent neighbour that supplies the local (clean) fast-axis
template $g$ (local slow-axis stationarity); if the neighbour is itself noisy, $\sigma^2$ is read
as the effective comparison variance. Let $\Lam_i:=\sum_j(\partial_y f)^2|_{(i,j)}$ so that
$\sum_j(g[j]-g[j{-}1])^2\approx\Lam_i\dy^2$.

\begin{lemma}[Robinson--Milanfar CRLB \cite{RobinsonMilanfar2004}]\label{lem:rm}
The Fisher information for the continuous fast-axis shift between the two observations is
$\Lam_i/\sigma^2$; any unbiased estimator obeys
$\mathrm{Var}(\widehat{\mathrm{shift}})\ge\sigma^2/\Lam_i$.
\end{lemma}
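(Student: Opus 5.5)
The plan is to compute the Fisher information directly from the Gaussian likelihood of the observed block viewed as a shifted copy of the template, and then apply the Cram\'er--Rao inequality. Let the continuous fast-axis shift be $s$, and model the observation as $A_i[j]=g(j\dy-s)+w_j$ with $w_j$ i.i.d.\ $\mathcal N(0,\sigma^2)$. Here $g$ is the local clean fast-axis template supplied by the grid-consistent neighbour, extended to a differentiable function of position through the field $f$. The log-likelihood is then $\ell(s)=-\tfrac{1}{2\sigma^2}\sum_j\bigl(A_i[j]-g(j\dy-s)\bigr)^2+\mathrm{const}$.

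The derivative in $s$ is $\ell'(s)=-\tfrac{1}{\sigma^2}\sum_j\bigl(A_i[j]-g(j\dy-s)\bigr)\,g'(j\dy-s)$. Its second moment under the model is $\tfrac{1}{\sigma^2}\sum_j g'(j\dy-s)^2$, because the residuals are the noise $w_j$, which are independent with variance $\sigma^2$. Identifying $g'$ with $\partial_y f$ along row $i$ gives Fisher information $\Lambda_i/\sigma^2$. One checks that this is the usual Gaussian-shift information $\|\partial_s\mu\|^2/\sigma^2$ for a mean vector $\mu(s)$. The scalar Cram\'er--Rao bound then gives $\mathrm{Var}(\widehat{\mathrm{shift}})\ge\sigma^2/\Lambda_i$ for any unbiased estimator, provided the standard regularity conditions hold: $\mu$ differentiable in $s$, and differentiation under the integral permitted. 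Both are immediate for Gaussian noise with a smooth mean.

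The main obstacle is the bookkeeping of units and discretization rather than the inequality itself. There are two routes.
\begin{itemize}
\item If $\Lambda_i$ is read as the sum of squared pointwise derivatives, the argument above applies verbatim with the shift measured in position units.
\item If one works with finite differences, the relation $\sum_j(g[j]-g[j-1])^2\approx\Lambda_i\dy^2$ applies. It converts the shift into index units, where the bound becomes $\sigma^2/(\Lambda_i\dy^2)$ per column.
\end{itemize}
I would state the position-unit version and note that the index-unit version follows by rescaling.

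If the neighbour is itself noisy, the same computation goes through with $\sigma^2$ replaced by the effective comparison variance, as the paper stipulates. For example, with independent noise on both observations the difference model has variance $2\sigma^2$; this affects only the constant. Since this is the Robinson--Milanfar result \cite{RobinsonMilanfar2004}, I would present the derivation briefly and cite that work for the full treatment, including its quantization effects.
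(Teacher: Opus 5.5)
Your derivation is correct and is essentially the argument behind the lemma. The paper gives no proof of its own and simply imports the Robinson--Milanfar bound, which is exactly this known-template Gaussian-shift computation: the score $\ell'(s)$, the Fisher information $\sigma^{-2}\sum_j g'(j\dy-s)^2=\Lambda_i/\sigma^2$, and then the scalar Cram\'er--Rao inequality, with the shift in position units as in the paper's later half-cell condition $\sigma/\sqrt{\Lambda_i}<\dy/2$. Your treatment of units and of the noisy-neighbour case matches how the paper uses the lemma, with the $2\sigma^2$ figure read as a heuristic for the paper's stipulated effective comparison variance rather than a nuisance-parameter Fisher calculation.
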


\begin{proposition}[Exact integer-shift recovery]\label{prop:signal}
With integer candidates in $\{-K_{\max},\dots,K_{\max}\}$ and the local model above, the
maximum-likelihood integer shift equals the truth simultaneously on all $R$ blocks with
probability at least $1-R(2K_{\max}{+}1)\exp(-\Lam_{\min}\dy^2/(8\sigma^2))$,
$\Lam_{\min}=\min_i\Lam_i$. In particular exact recovery holds w.h.p.\ (up to the exponent
constant) when
\begin{equation}\label{eq:c32}
  \frac{\Lam_{\min}\dy^2}{2\sigma^2}>\log\bigl(R(2K_{\max}{+}1)\bigr).
\end{equation}
If $\Lam_i=0$ the shift is unidentifiable (Lemma~\ref{lem:rm}).
\end{proposition}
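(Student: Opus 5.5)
The plan is a standard pairwise-error analysis of the maximum-likelihood integer decoder, closed with a union bound over candidates and blocks. Fix block $i$ with true offset $k_i$ and write the observation as $A_i[j]=g[j-k_i]+w[j]$, where $w[j]\sim\mathcal N(0,\sigma^2)$ are i.i.d. Under white Gaussian noise the ML integer shift minimizes $\sum_j(A_i[j]-g[j-k])^2$ over $k\in\{-K_{\max},\dots,K_{\max}\}$. For a competitor $k\neq k_i$, set $d_k[j]:=g[j-k]-g[j-k_i]$. The event ``$k$ beats $k_i$'' is then $2\langle w,d_k\rangle\ge\|d_k\|^2$. Since $\langle w,d_k\rangle\sim\mathcal N(0,\sigma^2\|d_k\|^2)$, this event has probability $Q(\|d_k\|/(2\sigma))\le\exp(-\|d_k\|^2/(8\sigma^2))$ by the Chernoff bound on the Gaussian tail.

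The key deterministic step is to lower-bound the separation $\|d_k\|^2$ by the gradient energy, namely $\|d_k\|^2\ge\sum_j(g[j]-g[j-1])^2\approx\Lam_i\dy^2$ for every $|k-k_i|\ge1$. For $|k-k_i|=1$ this holds with equality, up to boundary terms, by the definition of $\Lam_i$. For larger displacements I would argue under the local model in one of two ways. The first is via the spectrum: $\|g(\cdot-\ell)-g\|^2=\sum_\omega|\hat g(\omega)|^2\,4\sin^2(\ell\omega/2)$, and one needs this to be at least the $\ell=1$ value. The second, simpler route is to adopt the assumption, implicit in the phrase ``local model above'', that the template decorrelates at least as much at larger lags as at lag one. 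Pointwise the spectral comparison fails at some frequencies, which is why the statement says ``up to the exponent constant''. I expect this to be the main obstacle. My first attempt will state the minimal-separation condition $\min_{\ell\neq0}\|g(\cdot-\ell)-g\|^2\ge\Lam_i\dy^2$ as part of the local model, which is justified for the texture regime the proposition targets. I will also note that periodic textures violate it, which is exactly the case Prop~\ref{prop:ambig} is designed to handle.

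Granting the separation bound, each pairwise error is at most $\exp(-\Lam_i\dy^2/(8\sigma^2))\le\exp(-\Lam_{\min}\dy^2/(8\sigma^2))$. A union bound over the at most $2K_{\max}$ competitors in a block, and then over the $R$ blocks, bounds the total failure probability by $R(2K_{\max}{+}1)\exp(-\Lam_{\min}\dy^2/(8\sigma^2))$. The count $2K_{\max}{+}1$ is a harmless overcount of the competitors. This gives simultaneous exact recovery with the stated probability.

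The ``in particular'' clause follows by requiring the exponent to dominate $\log(R(2K_{\max}{+}1))$. The paper's display \eqref{eq:c32} uses the constant $2$ instead of $8$, so I would present it explicitly as holding ``up to the exponent constant'': a constant-factor strengthening of \eqref{eq:c32}, for example replacing it by $\Lam_{\min}\dy^2/(8\sigma^2)>\log(R(2K_{\max}{+}1))+\log(1/\eta)$, makes the failure probability at most $\eta$.

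Finally, for $\Lam_i=0$ the template is constant along the fast axis, so every $d_k\equiv0$. All candidate likelihoods then coincide and the shift is unidentifiable, which matches the vanishing Fisher information $\Lam_i/\sigma^2$ of Lemma~\ref{lem:rm}.
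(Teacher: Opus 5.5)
Your proposal is correct and follows the paper's proof essentially step for step. Both use the Gaussian pairwise log-likelihood comparison with the tail bound $Q(\sqrt{S}/(2\sigma))\le\exp(-S/(8\sigma^2))$, the minimal-separation assumption $S(d)\ge S(1)$ on textured non-periodic bands (which the paper, like you, simply posits), and a union bound over $R$ blocks and $2K_{\max}$ competitors. Your explicit treatment of the $8$-versus-$2$ constant is in fact more careful than the paper, which asserts that \eqref{eq:c32} drives the bound below $1$ even though this needs the factor-$4$ stronger condition you write down. One correction: that mismatch, not the spectral comparison, is what ``up to the exponent constant'' refers to, since no constant can rescue the spectral failure (a period-$2$ texture has $S(2)=0$), so it must remain an explicit assumption, as you in fact make it.
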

\begin{proof}
Let $C(\delta)=\tfrac{1}{2\sigma^2}\sum_j(A_i[j]-g[j{-}\delta])^2$. For the true shift $m$
and competitor $m'$, $\mathrm{LLR}=C(m')-C(m)$ has mean $S(m{-}m')/(2\sigma^2)$ and variance
$S(m{-}m')/\sigma^2$, where $S(d)=\sum_j(g[j]-g[j{-}d])^2$. For an adjacent competitor
$S(1)\approx\Lam_i\dy^2$; on a textured, non-periodic band the adjacent shift is the
minimal-separation competitor, so $S(d)\ge S(1)$ for $|d|\ge1$ (the degenerate flat/periodic
cases, where some $S(d)=0$, are exactly Proposition~\ref{prop:ambig}). A sub-Gaussian tail bound gives
single-competitor error $\le\exp(-S(1)/(8\sigma^2))$; a union bound over $R$ blocks and
$2K_{\max}$ competitors yields the stated probability, which \eqref{eq:c32} drives below
$1$. The converse is Lemma~\ref{lem:rm}.
\end{proof}

\begin{remark}[Exact constant]\label{rem:exactconst}
The sub-Gaussian bound is loose by design; with Gaussian noise the adjacent-competitor error is
exactly $\Phi(-\sqrt{\Lam_i\dy^2/(4\sigma^2)})=\Phi(-\sqrt{\rho_i/2})$, where
$\rho_i=\Lam_i\dy^2/(2\sigma^2)$ and $\Phi$ is the standard normal CDF. The simultaneous all-block
recovery threshold is then $\rho^\star$ solving $R\,(2K_{\max})\,\Phi(-\sqrt{\rho^\star/2})=1$; the
synthetic study S3 (Section~\ref{sec:val}) locates the empirical knee at this $\rho^\star$ to within
$3\%$, confirming \eqref{eq:c32} holds with the exact constant, not merely up to order.
\end{remark}

\subsection{Where the image channel fails: ambiguity}

\begin{proposition}[Ambiguity counterexamples]\label{prop:ambig}
The image registration channel is insufficient in two regimes. \textbf{(i) Texture-free:} if
$\partial_y f\equiv0$ on a band, then $S(d)=0$ for all $d$, the likelihood is flat, and
the offset is unidentifiable. \textbf{(ii) Repeated pattern:} if $f(x,\cdot)$ is
$p$-periodic on a band, then $A_i[\cdot]=A_i[\cdot{+}p]$ \emph{identically}, so
$C(\delta)=C(\delta{+}p)$ as functions of the data and the offset is unidentifiable modulo
$p$---regardless of $\Lam_i$, which may be large for a high-contrast grating. In both
regimes image-only recovery must abstain (or defer to metadata/anchors).
\end{proposition}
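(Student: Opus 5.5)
The argument is that in both regimes the likelihood used by the image channel is invariant under a nontrivial group of shifts. Non-identifiability then follows from a plain observational-equivalence argument, the same logic as Lemma~\ref{lem:obs} but on the image side. Work with the cost $C(\delta)=\tfrac{1}{2\sigma^2}\sum_j(A_i[j]-g[j{-}\delta])^2$ from the proof of Proposition~\ref{prop:signal}. Identifiability fails whenever two distinct candidate offsets $m\neq m'$ induce the same distribution of the observed row $A_i$. If they do, no estimator can beat chance between them, and any reported value is unsupported by the data. The proof shows that equality of distributions in each case; the abstention conclusion then follows.

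\textbf{Case (i), texture-free.} If $\partial_y f\equiv0$ on the band, then the template $g$ is constant there, so $g[j]-g[j{-}d]=0$ for all $j,d$ and $S(d)=0$ for every $d$. The mean of $A_i$ under offset $m$ is $g[\cdot-m]$, which is the same constant vector for every $m$. With additive white Gaussian noise, the law of $A_i$ is therefore the same for all candidates. Consequently $C(\delta)$ is the same random function of the data whatever $\delta$ is, so the likelihood is flat. This matches the degenerate end of Lemma~\ref{lem:rm}: with $\Lam_i=0$ the Fisher information vanishes.

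\textbf{Case (ii), periodic.} If $f(x,\cdot)$ is $p$-periodic (with $p$ measured in grid units) on the band, then $g[j]=g[j{+}p]$. Hence the mean under offset $m$ equals the mean under offset $m{+}p$, and the two hypotheses are observationally equivalent. In particular $S(p)=0$, and $C(\delta)=C(\delta{+}p)$ holds identically as a function of the data. The MLE is therefore determined at best modulo $p$. A large $\Lam_i$ does not help: it only bounds the local curvature, which is $S(1)$, whereas the ambiguity comes from the distant competitor at distance $p$.

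The step requiring care is boundary handling. Over a finite band, a shifted row may pull in samples from outside the band, where periodicity or flatness may fail. I would handle this by stating the result on the band interior, meaning the indices $j$ with $j-\delta$ in the band for all candidates $|\delta|\le K_{\max}$, or equivalently by assuming the band is wide enough that the compared windows stay inside it. Finally, the ``must abstain'' clause: since distinct offsets yield identical data distributions, any deterministic image-only estimator returns the same output under both. It is therefore wrong on at least one of them, with the error guaranteed to be adversarially realizable, so the only correct policies are to abstain or to defer to the metadata channel of Proposition~\ref{prop:meta}.
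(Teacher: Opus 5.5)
Your argument is correct, and it reaches the conclusion by a different route from the paper's.

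\textbf{What the paper does.} It works only at the level of the cost function. In (i) it notes $S(d)\equiv0$, so $\mathbb{E}[C]$ is constant and the score vanishes (Lemma~\ref{lem:rm}). In (ii) it asserts $A_i[j]=A_i[j-p]$ on the band and reads off $C(\delta)=C(\delta+p)$, so the ML argmax is non-unique mod $p$.

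\textbf{What you do instead.} You prove observational equivalence of the data laws. The mean row $g[\cdot-m]$ coincides for all $m$ in (i), and for $m$ and $m+p$ in (ii). Under white Gaussian noise, $A_i$ therefore has the same distribution under the competing offsets. The flatness or $p$-periodicity of $C$ then comes out as a by-product.

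\textbf{What your route buys.} First, it gives unidentifiability for \emph{every} image-only estimator: for any rule, $P_m(\widehat k=m)+P_{m'}(\widehat k=m')\le1$. That is what the ``must abstain'' clause actually requires. The paper's cost-level argument directly shows only that the ML decoder cannot break the tie. Second, you route (ii) through periodicity of the template $g$ rather than of the noisy row, and this is the right choice. The literal identity $A_i[\cdot]=A_i[\cdot+p]$ holds only for the noise-free samples. Meanwhile $C(\delta)=C(\delta+p)$ follows from $g[j-\delta]=g[j-\delta-p]$ alone. Third, you make the band-interior condition explicit.

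\textbf{What the paper's route buys.} It is shorter, and it ties the statement directly to the cost margin that the certificate thresholds.

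\textbf{A caveat neither version states.} In (ii), the ambiguity forces abstention only if two admissible candidates differ by $p$, i.e.\ $p\le 2K_{\max}$ with $p$ an integer number of cells. Otherwise ``modulo $p$'' already pins the offset within the candidate window.
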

\begin{proof}
(i) $\partial_y f\equiv0\Rightarrow S(d)\equiv0\Rightarrow\mathbb E[C]$ constant and the
score identically zero (Lemma~\ref{lem:rm}). (ii) $p$-periodicity gives $g[j]=g[j{-}p]$, so
$A_i[j]=A_i[j{-}p]$ on the band and $C(\delta)=C(\delta{+}p)$ as data functions; the
integer argmax is non-unique mod $p$.
\end{proof}

These counterexamples are required: they prevent any
``texture-energy-above-threshold $\Rightarrow$ recoverable'' overclaim and motivate
abstention.

\subsection{Sub-grid residual}

\begin{proposition}[Sub-grid residual, sufficient conditions]\label{prop:subgrid}
A continuous within-grid residual $c_i\in(-\dy/2,\dy/2]$ places the problem in the
classical nonuniform-sampling regime; under bandlimited / shift-invariant-space hypotheses,
Kadec-type and Aldroubi--Gr\"ochenig results \cite{AldroubiGrochenig2001} give
\emph{sufficient} stability and reconstruction conditions. We invoke these only as
sufficient conditions; no ``iff'' transfer is claimed without verifying the hypotheses.
\end{proposition}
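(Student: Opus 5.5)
The proof will be a reduction rather than a new analysis. Proposition~\ref{prop:subgrid} only claims that a cited theory applies under its own hypotheses, so the job is to show that the sub-grid residual problem is an instance of the classical perturbed-sampling problem.

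\textbf{Step 1: write down the sampling set.} After the integer offset $k_i$ has been removed (by Proposition~\ref{prop:meta}, or by Proposition~\ref{prop:signal} on certified blocks), the remaining per-block error is $c_i\in(-\dy/2,\dy/2]$. Record $r$ of block $i$ was therefore taken at $y_{i,r}=r\dy+c_i$ rather than at $r\dy$. Stacking blocks along $x_i=x_0+i\dx$ gives the sampling set
\[
X=\{(x_i,\,r\dy+c_i)\}.
\]
This is an exact lattice perturbed by bounded, unknown (or estimated) jitter. It is precisely the object of Kadec-type results and of \cite{AldroubiGrochenig2001}.

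\textbf{Step 2: match the hypotheses.} I would state the conditions under which the cited results apply, one fast-axis line at a time (or jointly, in a shift-invariant space $V(\phi)$).
\begin{itemize}[leftmargin=1.4em]
\item Suppose $f(x_i,\cdot)$ is bandlimited, and the sup-norm jitter $\sup_i|c_i|/\dy$ is below Kadec's $1/4$ (after rescaling the grid to the Nyquist rate). Then $\{r\dy+c_i\}_r$ is a Riesz-basis/frame sampling set, giving stable reconstruction with explicit frame bounds.
\item For $f\in V(\phi)$, the maximal-gap and perturbation conditions of \cite{AldroubiGrochenig2001} supply the analogous sufficient condition, together with iterative reconstruction.
\end{itemize}
Because $c_i$ is constant within a block, the jitter is rigid per line. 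That is a special case of the general jitter, so the cited conditions apply \emph{a fortiori}.

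\textbf{Step 3: handle what does not transfer.} Two hypotheses may not be available.
\begin{itemize}[leftmargin=1.4em]
\item Kadec needs $|c_i|<\dy/4$, while the residual range here is $\dy/2$. So the transfer holds only on the sub-band $|c_i|<\dy/4$, or with oversampling.
\item The results assume the sample locations are known. With unknown $c_i$, one first needs either a consistent estimate (Lemma~\ref{lem:rm} bounds its variance by $\sigma^2/\Lam_i$) or perturbation-robust error bounds.
\end{itemize}
This is why the proposition asserts only sufficiency and explicitly disclaims an ``iff''.

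\textbf{Main obstacle.} The hardest step is this hypothesis-verification caveat: the reduction is only as strong as the bandlimit/shift-invariant assumption on $f$ and the known-location assumption. I would make both explicit conditions of the statement rather than try to prove them.
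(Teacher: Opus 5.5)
Your reduction is correct and is essentially the paper's own treatment: Proposition~\ref{prop:subgrid} has no separate proof. The paper only identifies the residual problem with perturbed-lattice sampling, cites the Kadec/Aldroubi--Gr\"ochenig results as sufficient under their own hypotheses, and disclaims the ``iff'' just as you conclude.

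One correction to Step~3 concerns the Kadec restriction. Since you model $c_i$ as rigid per line, in the bandlimited case each line's samples $\{r\dy+c_i\}_r$ are an exact translate of the uniform lattice. Paley--Wiener space is translation invariant, so with known $c_i$ reconstruction is already exact and stable for every $c_i\in(-\dy/2,\dy/2]$; Kadec's $\dy/4$ radius only matters for sample-to-sample jitter. By contrast, in a shift-invariant space $V(\phi)$ a rigid shift can genuinely destroy stability. For example, with $c_i=\dy/2$ and the linear B-spline, the midpoint samples annihilate the alternating coefficient sequence. Also, Lemma~\ref{lem:rm} is only a variance floor and does not by itself supply an accurate estimate of an unknown $c_i$.
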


\subsection{One condition, three channels: half-cell resolution}\label{sec:halfcell}

Theorem~\ref{thm:threshold}, Corollary~\ref{cor:metathr}, and Proposition~\ref{prop:signal}
are three appearances of a single principle. Call the \emph{phase cell} the inter-crossing
interval in time ($\dy/v$) or, equivalently, the inter-column spacing in position ($\dy$).
Pinning a block's phase requires resolving the phase coordinate to \emph{less than half its
cell}, wherever the pinning is performed:
\begin{itemize}[leftmargin=1.5em]
  \item \emph{at acquisition}: a synchronization reference with $\delta<\dy/(2\vmax)$
  (Theorem~\ref{thm:threshold});
  \item \emph{from metadata}: a timestamp with error below $\Delta t_{\mathrm{nom}}/2=
  \dy/(2v)$ (Corollary~\ref{cor:metathr});
  \item \emph{from the image}: a shift estimate resolving half a column---the
  Robinson--Milanfar-limited standard deviation $\sigma/\sqrt{\Lam_i}<\dy/2$, which is
  \eqref{eq:c32} with the constant replaced by the $\log$-multiplicity term needed for
  simultaneous decoding over $R$ blocks and $2K_{\max}{+}1$ candidates.
\end{itemize}
The three proofs share one shape: correct integer decoding is membership of a perturbed
reference in the Voronoi cell of the truth. As a design heuristic \emph{for this problem} (we claim no
universal or cross-modal budget): \emph{the half-cell margin must be paid by one of three
channels}---at the trigger (prevention), in the metadata (exact recovery), or in the texture
(statistical recovery). A pipeline that pays it in none leaves the phase unrecoverable, and the
regimes of Proposition~\ref{prop:ambig} are exactly where the third channel is unavailable.

\subsection{Calibrated abstention certificate}\label{sec:cert}

The recoverability flags are combined into a per-block decision and calibrated
distribution-free by conformal prediction \cite{VovkGammermanShafer2005,AngelopoulosBates2023,Belhasin2024}.
This is a \emph{supporting framework}, not a new theorem.

\noindent\textbf{Procedure.}
(1) Compute a score $s_i$ (the matching-cost margin of Proposition~\ref{prop:signal}, the
energy $\Lam_i$, and a periodicity-detector statistic guarding
Proposition~\ref{prop:ambig}).
(2) Abstain iff $s_i<\lambda$; the kept-region geometric risk is monotone in $\lambda$.
(3) On a calibration set with known geometry, choose $\widehat\lambda$ by conformal risk
control \cite{ConformalRisk2024} to control the kept-region error at level $\alpha$.

\begin{proposition}[Risk control]\label{prop:cert}
With $\widehat\lambda$ chosen by conformal risk control on an exchangeable calibration set
of size $n$ and a bounded monotone geometric-risk loss $L_\lambda$,
$\mathbb E[L_{\widehat\lambda}(\mathrm{test})]\le\alpha$, distribution-free and
finite-sample (tight to $O(1/n)$), under exchangeability.
\end{proposition}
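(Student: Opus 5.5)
The plan is to reproduce the standard conformal risk control argument of \cite{ConformalRisk2024}, specialized to the abstention loss. First I fix the objects. For each calibration block (or image) $j=1,\dots,n$ and the test point $n{+}1$, set $L_j(\lambda)\in[0,B]$ to the kept-region geometric error when abstaining on scores below $\lambda$. Because raising $\lambda$ only enlarges the abstained set, $L_j(\lambda)$ is non-increasing in $\lambda$; this is step (2) of the procedure. I will also assume right-continuity, which can always be arranged by the convention for ties, and that some $\lambda_{\max}$ gives $L_j(\lambda_{\max})\le\alpha$. Abstaining everywhere gives zero kept-region loss, so $\lambda_{\max}=+\infty$ works. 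The calibrated threshold is
\[
\widehat\lambda=\inf\Bigl\{\lambda:\ \tfrac{n}{n+1}\widehat R_n(\lambda)+\tfrac{B}{n+1}\le\alpha\Bigr\},\qquad \widehat R_n(\lambda)=\tfrac1n\textstyle\sum_{j=1}^n L_j(\lambda).
\]

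The key step is an oracle comparison. I define the symmetric threshold
\[
\lambda'=\inf\bigl\{\lambda:\ \tfrac{1}{n+1}\textstyle\sum_{j=1}^{n+1}L_j(\lambda)\le\alpha\bigr\},
\]
which uses all $n{+}1$ losses. Since $L_{n+1}\le B$ and each $L_j$ is monotone, $\lambda'\le\widehat\lambda$. Monotonicity then gives $L_{n+1}(\widehat\lambda)\le L_{n+1}(\lambda')$. Next, $\lambda'$ is a symmetric function of the multiset $\{L_1,\dots,L_{n+1}\}$, so exchangeability gives
\[
\mathbb E[L_{n+1}(\lambda')]=\mathbb E\Bigl[\tfrac{1}{n+1}\textstyle\sum_{j}L_j(\lambda')\Bigr]\le\alpha,
\]
where the last inequality holds by right-continuity at the infimum. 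Chaining these yields $\mathbb E[L_{\widehat\lambda}(\mathrm{test})]\le\alpha$ with no distributional assumption.

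For the tightness clause I will cite the matching lower bound of \cite{ConformalRisk2024}. If the losses are continuous in $\lambda$ (no atoms/ties among jump points), then $\mathbb E[L_{n+1}(\widehat\lambda)]\ge\alpha-2B/(n+1)$, so the guarantee is tight up to $O(1/n)$. I would only sketch this, since it is inherited.

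The main obstacle is not the conformal algebra but verifying its hypotheses for this loss. The score $s_i$ combines the matching-cost margin, $\Lam_i$ and a periodicity statistic, and I must check that the kept-region loss is genuinely monotone and bounded in $\lambda$ for it. If the loss is normalized (a kept-region error \emph{rate}), removing a block can increase the rate, and monotonicity fails. I would therefore define $L_\lambda$ as the unnormalized count or indicator of kept blocks with $\hat k_i\ne k_i$, which is clearly non-increasing, divided by a fixed $R$. I would also state explicitly that exchangeability is assumed at the level of calibration units (images or blocks), and that the guarantee is on the mean, not per-instance, as the paper's remark says.
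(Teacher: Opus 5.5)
Your proposal is correct and takes the paper's route. The paper gives no argument of its own beyond invoking the conformal risk control theorem of \cite{ConformalRisk2024}, and you reproduce that theorem's standard proof: the symmetric oracle threshold $\lambda'\le\widehat\lambda$ via $L_{n+1}\le B$, then monotonicity, then exchangeability and right-continuity at the infimum. You also correctly make explicit two hypotheses the paper leaves implicit: a count-type rather than rate-type kept-region loss, so that it is monotone, and a tie convention giving right-continuity, since the literal rule ``abstain iff $s_i<\lambda$'' yields a left-continuous step loss. One refinement to your inherited tightness sketch: your losses are step functions in $\lambda$, so the operative conditions for the $\alpha-2B/(n+1)$ lower bound are i.i.d.\ units with atomless jump locations (continuous scores), not continuity of the loss in $\lambda$. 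The bound also needs an active constraint: if the uncertified decoder's risk is well below $\alpha$, then $\widehat\lambda$ keeps everything and the achieved risk can sit arbitrarily far below $\alpha$.
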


\begin{remark}[Honest limits]
Proposition~\ref{prop:cert} controls risk \emph{where the method does not abstain}; it does
not identify the unrecoverable set exactly, and requires exchangeability (recalibrate under
distribution shift). Crucially, the guarantee is on the \emph{mean} kept-region risk
$\mathbb{E}[L_{\widehat\lambda}]\le\alpha$ over the exchangeable calibration/test draw---an
average, \emph{not} a per-instance certificate that any single kept block is correct: individual
kept blocks may still err, only their aggregate risk is bounded (S5 shows the achieved mean stays
at or below the target). The certificate is a calibration wrapper around the identifiability
content of Propositions~\ref{prop:signal}--\ref{prop:ambig}; it is not itself a contribution.
\end{remark}

\subsection{Stochastic re-initialization}\label{sec:stoch}

The worst-case results above are complemented by an average-case model fitted to data. The
per-block offset sequence is structured rather than white: in the logged position-only capture
($5\,\mu$m) its same-direction lag-1 autocorrelation is $0.40$--$0.56$ (Table~\ref{tab:rig},
Section~\ref{sec:val}). (An earlier large-offset-regime estimate---$\approx0.62$ at $1\,\mu$m /
$50$\,mm/s---first motivated the model.) This structure motivates a rounded first-order autoregression
$k_i=\mathrm{round}(\mu+z_i)$, $z_i=\rho z_{i-1}+\eta_i$, in place of an independence
assumption. Under this model, recovering $\{k_i\}$ reduces to estimating its increments
(Proposition~\ref{prop:signal}) and accumulating them.

\begin{proposition}[Sufficient recovery / degradation / impossibility]\label{prop:stoch}
Under the AR(1) model: \textnormal{(a)} if every increment clears the
Proposition~\ref{prop:signal} threshold and texture is locally non-periodic, all increments
are exact w.h.p.\ and $\{k_i\}$ is recovered up to a global constant (one anchor resolves
it); \textnormal{(b)} if increments fail with probability $q>0$, the accumulated error is a
random walk of standard deviation $\Theta(\sqrt{i\,q})$, bounded to $\Theta(\sqrt{L\,q})$ by
anchors every $L$ blocks; \textnormal{(c)} texture-free or $p$-periodic bands make increments
unidentifiable (Proposition~\ref{prop:ambig}).
\end{proposition}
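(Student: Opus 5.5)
I will prove the three parts separately, all through one telescoping identity. Write $d_i:=k_i-k_{i-1}$ for the true increments and $\widehat d_i$ for the maximum-likelihood integer shift between block $i$ and block $i{-}1$ from Proposition~\ref{prop:signal}. Block $i{-}1$ acts as the grid-consistent neighbour that supplies the template, and its noise is absorbed into the effective comparison variance. The reconstruction is $\widehat k_i:=\widehat k_0+\sum_{j=1}^{i}\widehat d_j$, so that
\[
\widehat k_i-k_i=(\widehat k_0-k_0)+\sum_{j=1}^{i}(\widehat d_j-d_j).
\]
The AR(1) model enters in only one place: it keeps the increments in a bounded candidate range. Stationarity of $z_i$ gives $|d_i|\le K_{\max}$ with high probability, so the candidate set $\{-K_{\max},\dots,K_{\max}\}$ of Proposition~\ref{prop:signal} applies. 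The recovery argument itself is conditional on the realized $\{k_i\}$ and does not use the AR(1) law beyond this.

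\textbf{(a).} I apply Proposition~\ref{prop:signal} with $R$ pairwise comparisons. Under \eqref{eq:c32}, and using local non-periodicity to rule out the $S(d)=0$ competitors of Proposition~\ref{prop:ambig}, the union bound gives $\widehat d_j=d_j$ for all $j$ simultaneously with probability at least $1-R(2K_{\max}{+}1)e^{-\Lam_{\min}\dy^2/(8\sigma^2)}$. On that event the sum in the identity vanishes, so $\widehat k_i-k_i=\widehat k_0-k_0$ is a single global constant. The pairwise likelihood depends only on differences, so this constant is invisible to the image and must be fixed externally. One anchor supplies it: a block with a metadata timestamp under Proposition~\ref{prop:meta}, or with a known feature.

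\textbf{(b).} I model the increment errors $\epsilon_j:=\widehat d_j-d_j$ as nonzero with probability $q$ and bounded by $2K_{\max}$. I also assume they are independent or weakly dependent across $j$, since each comparison uses a different noise realization. This is the step I expect to be the main obstacle. Adjacent comparisons share a block, and the AR(1) correlation in $k_i$ can correlate the textures that are compared. To handle this, I would first use sign-symmetry of the ML error around the truth to make the $\epsilon_j$ mean zero, giving a martingale. I would then use an alternating-block (even/odd pairing) argument to get independence up to a constant factor. With $\mathrm{Var}(\epsilon_j)=\Theta(q)$, the variance of $\sum_{j\le i}\epsilon_j$ is $\Theta(iq)$. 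This gives the upper bound $O(\sqrt{iq})$ through the martingale variance, and the matching lower bound from a nondegenerate error, for example $\pm1$ adjacent-competitor errors, which occur with probability $\Theta(q)$. Anchors every $L$ blocks reset the walk, so the error at any block is a partial sum of at most $L$ terms, of order $\Theta(\sqrt{Lq})$.

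\textbf{(c).} This follows directly from Proposition~\ref{prop:ambig}. On a texture-free band $S(d)\equiv0$, so the pairwise likelihood for $d_i$ is flat. On a $p$-periodic band $C(\delta)=C(\delta{+}p)$ as functions of the data. In either case $d_i$ is not identifiable, respectively not identifiable modulo $p$. The telescoping identity then carries that ambiguity into every later $\widehat k_j$ until an anchor is reached, so image-only recovery must abstain there, as the certificate of Section~\ref{sec:cert} does.
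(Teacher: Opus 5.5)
Your route matches the paper's: accumulate pairwise increments through the telescoping identity, get (a) from Proposition~\ref{prop:signal} with a union bound (one anchor fixes the unobservable global offset), get (b) from an elementary random-walk variance count, and get (c) directly from Proposition~\ref{prop:ambig}. Parts (a) and (c) are fine as written. The step that would fail is your handling of dependence in (b).

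First, a mean-zero marginal does not make $\sum_j\epsilon_j$ a martingale when $\epsilon_j$ and $\epsilon_{j+1}$ are both driven by the noise of the shared block $j$. Second, write $W_{\mathrm{ev}},W_{\mathrm{od}}$ for the even- and odd-indexed partial sums. The even/odd split then gives only the upper bound
\[
\mathrm{Var}\Bigl(\sum_{j\le i}\epsilon_j\Bigr)\le 2\bigl(\mathrm{Var}\,W_{\mathrm{ev}}+\mathrm{Var}\,W_{\mathrm{od}}\bigr)=O(iq).
\]
The lower bound needs $\mathrm{Cov}(W_{\mathrm{ev}},W_{\mathrm{od}})$ not to cancel the diagonal terms, and here that covariance is negative. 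A noise excursion that makes block $j$ look shifted by $+1$ gives $\epsilon_j=+1$ and $\epsilon_{j+1}=-1$, so $\widehat k_{j+1}$ is already correct again. Suppose every failure were of this single-bad-block type, with block $j$ appearing shifted by $b_j$. Then $\epsilon_j=b_j-b_{j-1}$, and the partial sum telescopes to $b_i-b_0$ and stays $O(1)$. So the $\Omega(\sqrt{iq})$ half of the claim cannot be extracted from the pairwise-comparison structure by a decoupling argument.

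What (b) actually needs is that increment failures be independent across $j$. That is what the statement's ``the accumulated error is a random walk'' presupposes, and what the paper's appeal to an elementary random-walk bound relies on. State it as a hypothesis rather than trying to derive it. Then you need neither sign-symmetry nor a martingale, because the claim concerns the standard deviation, not the mean. Take $\epsilon_j\ne0$ with probability $q$ and $|\epsilon_j|\le 2K_{\max}$. Cauchy--Schwarz applied to $\mathbb{E}[\epsilon_j\mathbf{1}\{\epsilon_j\ne0\}]$ gives $q(1-q)\le\mathrm{Var}(\epsilon_j)\le 4K_{\max}^2q$. Hence $\mathrm{Var}(\sum_{j\le i}\epsilon_j)=\Theta(iq)$ for $q$ bounded away from $1$. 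Resetting at anchors leaves at most $L$ terms in the sum, which gives $\Theta(\sqrt{Lq})$.
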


\noindent The autoregressive model is a modelling choice motivated by the observed structure; the
formal proof of (a)--(c)---from Proposition~\ref{prop:signal}, an elementary random-walk bound, and
Proposition~\ref{prop:ambig}---is given in the Supplementary Material (Sec.~S-B). Fitting its parameters to
measured offsets at the operating point is a measurement still to be made, and we claim no sharp
transition.

\section{Evidence and validation}\label{sec:val}

The theorems of Sections~\ref{sec:model}--\ref{sec:rec} are self-contained. We first exhibit
the predicted phenomenon on a real position-triggered scanner---the primary evidence---then
corroborate the quantitative predictions with a synthetic sampler. The Section~\ref{sec:real}
values were recorded during the original investigation; the raw acquisition logs are being
re-captured for the data release, and until that re-capture lands they are reported as the
investigation record rather than as deposited data. We mark explicitly the one quantity (an
offset-sequence autocorrelation) that is measured in a different operating regime and used
only as modelling motivation.

\paragraph{What this instrument shows, and what we isolate.}
At submicron pitch the dominant \emph{visible} artifact is an L1 violation---intra-block velocity
ripple warping the within-block spacing (up to $\sim\!1199$\,px at $200$\,mm/s; Supplementary
Table~S5)---which we \emph{bracket out}. This paper characterizes the position-independent L2
violation: a systematic, direction-bimodal displacement of the block index origin ($17.3$ columns on
this system, Table~\ref{tab:rig}), a rigid per-direction shift that misplaces every feature in a block
while the image still looks sharp. Isolating L2 is a deliberate scoping choice (the cleanly separable
layer), not a claim it is the only failure mode at this pitch.

\subsection{The phenomenon on a real position-triggered scanner}\label{sec:real}

The system is a laser-ultrasound C-scanning platform operated as a bidirectional raster scanner
with absolute-encoder position-compare triggering and a count-locked digitizer that
re-initializes between blocks---an instance of the class of Definition~\ref{def:policy}. The
signal in these measurements is the continuous-wave detection response (DC-averaged); the
index-fidelity analysis below is signal-agnostic and does not depend on the detection modality.
The deployed system also included a fixed-function hardware trigger-conditioning stage; the
analysis and claims in this paper do not depend on it. Per-record hardware timestamps and the
row-assembler shift metric give the per-block offset and the per-row grid-phase shift directly. A
theory--measurement map relating each result to its logged observable is in the Supplementary
Material (Table~S3).

\paragraph{Measured per-block offsets (Theorem~\ref{thm:sep}).}
On a logged $5\,\mu$m position-only scan ($903$ blocks) the per-block offset (the timestamp-derived
index of record~$0$) is \emph{bimodal by scan direction}: its mean is $+6.9$ columns on forward
sweeps and $-10.4$ columns on reverse sweeps, a rigid \textbf{$17.3$-column bidirectional index
offset} (one-way $\approx\!8.65$ columns $=43.3\,\mu$m; Table~\ref{tab:rig}, Fig.~\ref{fig:khist}),
while the column spacing is simultaneously on grid. This large, systematic, timestamp-verified
displacement of the block index origin is the paper's real-system L2 evidence---a direct empirical
instance of L1$\,\wedge\,\neg$L2 under a position-only acquisition policy. The \emph{within}-direction
scatter about each mean is only $\approx\!1$ column (std $0.97/1.19$ FWD/REV), which sits near the
image cross-correlation noise floor ($\approx\!1$\,px) and is \emph{not} the headline result. The
surplus-trigger count per block has zero variance, so the offset is a capture-window phase, not
trigger jitter.

\paragraph{The offset is a trigger-grid phase, not velocity ripple (Theorem~\ref{thm:sep}).}
The offset is \emph{independent} of the intra-sweep velocity ripple (timestamp std $\sim\!5.5\,\mu$s
does not move position-compared columns) but scales with the \emph{mean} speed through
$\Phi=\delta v/\dy$, a fixed trigger-to-capture gap $\delta\approx110\,\mu$s acting oppositely on the
two scan directions---an L2 index-origin shift, distinct from the continuous L1 warp. Writing
$k=s\Phi+c$ ($s=\pm1$) separates the direction-reversing swing $\Phi\approx8.65$ columns (the L2
half-swing) from a benign global constant $c\approx-1.78$ columns that does \emph{not} enter the L2
claim; the headline is the $c$-independent inter-direction displacement $2\Phi=17.3$ columns. We
calibrate $\delta$ from the symmetric swing, not the raw means, and attribute no cause to $c$ (a small
forward/reverse speed asymmetry cannot be excluded without the rate-matched capture noted as future
work). The full decomposition and the $\delta$-versus-jitter distinction are in the Supplementary
Material (Sec.~S-C).

\paragraph{Straddle, subharmonic, and threshold (investigation record \textbf{[I]}).}
Two further predictions are corroborated by the June investigation record (raw logs not retained;
detail in Supplementary Table~S4), reported as investigation-record, not logged data. On
an exact-count run one of eight blocks straddles the turnaround and the completed-batch cadence falls to
one row per bidirectional cycle ($\approx449$\,ms), the subharmonic of Corollary~\ref{cor:subharm}, the
worst a catastrophic $\approx22$\,ms straddle; surplus-count and $P^\star$-armed runs do not straddle
(zero of eight), the structural cure of Theorem~\ref{thm:suff}. And at $1\,\mu$m/$200$\,mm/s the
predicted tolerance $\dy/(2\vmax)=2.5\,\mu$s is exceeded by the recorded $\sim\!5\,\mu$s jitter, placing
an unsynchronized reference on the failing side of \eqref{eq:thr} (Theorem~\ref{thm:threshold}),
consistent with the residual offsets.

\paragraph{Position-only acquisition exhibits the offset (Theorem~\ref{thm:imposs}).}
The logged configuration is a position-only acquisition policy: it triggers on encoder position
and applies no origin-coupled reference. It nevertheless shows a nonzero per-block offset on almost
every block (Table~\ref{tab:rig})---exactly the failure Theorem~\ref{thm:imposs} predicts for
position-only policies, and an empirical instance of it (a single instance is consistent with the
theorem, not a proof of it). The offset is recoverable off-line where the image is identifiable:
an image-based registration estimator (maximum-likelihood integer shift, Robinson--Milanfar/CRLB-%
limited) estimates and inverts the per-line shift---the image channel of
Section~\ref{sec:rec}, equipped with an identifiability certificate that abstains on its
texture-free/periodic ambiguity band---which is what Fig.~\ref{fig:beforeafter} illustrates.
\emph{This real-system recovery is qualitative only}: Fig.~\ref{fig:beforeafter}'s panels are
rendered-image proxies that mix L1/L2 and are not used to quantify recovery. The quantitative
recovery-and-abstention evidence---identifiable rows recovered exactly, fabrication on the
unidentifiable band, certified abstention---is the gated synthetic benchmark
(Table~\ref{tab:recovery}, Fig.~\ref{fig:recovery}), not this figure. A rate-matched, logged capture
that decomposes the per-block L2 offset from the intra-block L1 warp on the same region is the
measurement that would sharpen these figures, and is future work. We report this as a single measured instance confirming the
phenomenon on a deployed system, not a characterization: a rate-matched hardware intervention that
isolates a single remedy (a controlled before/after) is out of scope here, and we attribute no
micro-cause to the constant $c$.

\begin{figure}[t]
\centering
\includegraphics[width=\linewidth]{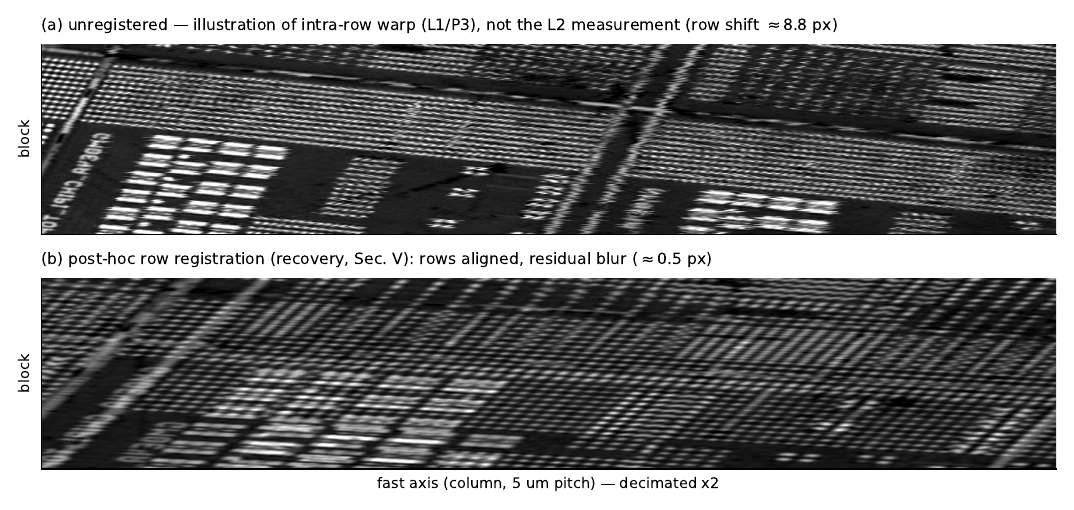}
\caption{Qualitative illustration of the recovery channel on a real capture ($5\,\mu$m pitch).
\textbf{(a) unregistered:} an \emph{illustration of continuous intra-row warp} (an L1/P3 regime we
bracket out, Supplementary Material, Sec.~S-C), \emph{not} the L2 measurement---this state was not
logged, and its per-row distortion varies \emph{within} each row (left/mid/right sub-window shifts
of a single row disagree by $\approx9$\,px, exceeding the $\approx6$\,px median row shift), so it is
warp-dominated rather than a rigid per-block offset. A rendered-image proxy gives same-direction
row-shift std $\approx8.8$\,px. \textbf{(b) image-based registration:} the image channel of
Section~\ref{sec:rec} (a Robinson--Milanfar/CRLB-limited estimator with an identifiability
certificate) estimates and inverts the per-row shift where it is identifiable, aligning the
rows ($\approx0.5$\,px) but leaving a residual blur. The discrete per-block L2 offset is evidenced
separately by the logged $k_i$ of Table~\ref{tab:rig} (a $17.3$-column bidirectional index offset),
not by this figure; the two panels are rendered-image proxies (mixing L1/L2), shown to illustrate the
recovery channel, not to quantify L2.}
\label{fig:beforeafter}
\end{figure}

\paragraph{The logged position-only offset (Table~\ref{tab:rig}, Fig.~\ref{fig:khist}).}
Per-record hardware timestamps give the inter-block index offset directly. In the logged position-only
capture ($5\,\mu$m, $903$ blocks) the systematic bidirectional offset is $17.3$ columns
(FWD $+6.9$, REV $-10.4$; Table~\ref{tab:rig}, Fig.~\ref{fig:khist}), with a within-direction jitter
of std $\approx\!1$ column (max $|k_i|=12$) and lag-1 autocorrelation $0.40$--$0.56$. It was captured
with \emph{full} segment counts at the nominal inter-crossing interval, so no crossings are dropped
and L1's consecutive-crossing premise (Theorem~\ref{thm:L1all}) holds while the per-block index
origin is displaced---the L1$\,\wedge\,\neg$L2 signature. The within-direction autocorrelation is the
measured value used in the AR(1) model of Section~\ref{sec:stoch}, replacing the different-regime
value that only motivates it.

\begin{figure}[t]
\centering
\includegraphics[width=0.86\linewidth]{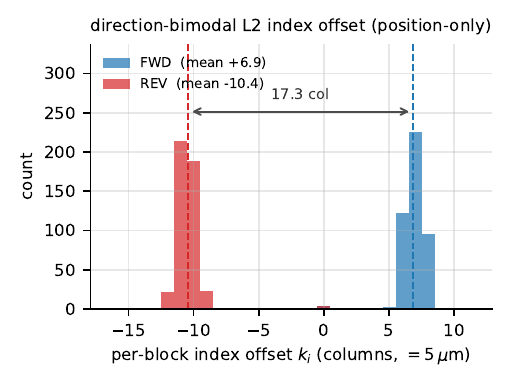}
\caption{Logged per-block index offset $k_i$ for the position-only configuration ($5\,\mu$m),
\emph{bimodal by scan direction}: forward sweeps centre at $+6.9$ columns, reverse at $-10.4$---a
rigid $17.3$-column bidirectional index offset (the L2 signal), about which the within-direction std
is only $\approx\!1$ column. The column spacing is simultaneously on grid. Columns are $5\,\mu$m.}
\label{fig:khist}
\end{figure}

\begin{table}[t]
\centering
\caption{Logged per-block index offset $k_i$, position-only configuration ($903$ blocks, $5\,\mu$m
pitch, $\approx\!390$\,mm/s). Bimodal by scan direction (FWD $+6.9$, REV $-10.4$ columns): a rigid
\textbf{$17.3$-column bidirectional index offset}---the real-system L2 evidence
(Theorems~\ref{thm:sep},~\ref{thm:imposs}), no origin-coupled reference applied. Full interpretation in
Section~\ref{sec:real}; L1 warp in Supplementary Table~S5.}
\label{tab:rig}
\small
\begin{tabular}{lccc}
\hline
scan direction & mean $k_i$ (col) & std $k_i$ (col) & lag-1 autocorr \\
\hline
forward (FWD) & $+6.9$ & $0.97$ & $0.40$ \\
reverse (REV) & $-10.4$ & $1.19$ & $0.56$ \\
\hline
\end{tabular}
\end{table}

\subsection{Synthetic corroboration}\label{sec:synthetic}

A simulated grid-locked block-sequential sampler reproduces the theorems' quantitative
predictions in a controlled, instrument-independent setting. The simulator (Algorithm~A,
released with the paper) samples a known field on the grid, draws a re-initialization delay
per block (worst-case or autoregressive per Section~\ref{sec:stoch}), resumes count-locked
capture at the first crossing past $a_i$, applies the L3 maps, and adds noise, emitting the
array, the true $\{k_i\}$, and per-record timestamps. Five studies map one-to-one to the
results: S1 sweeps the reference jitter $\delta$ across $\dy/(2\vmax)$ and locates the
transition of Theorem~\ref{thm:threshold}; S2 sweeps the arm delay and traces the cadence of
Corollary~\ref{cor:subharm}; S3 sweeps texture energy $\Lam/\sigma^2$ and locates the
detectability threshold \eqref{eq:c32}; S4 uses periodic and texture-free targets to show the
ambiguity of Proposition~\ref{prop:ambig} and that the certificate abstains while the metadata
channel still succeeds; S5 reports conformal coverage against the nominal level. These
experiments test---rather than replace---the predictions: the real-system measurements of
Section~\ref{sec:real} establish that the phenomenon occurs, and the synthetic studies confirm
the predicted thresholds and cadence quantitatively.

\begin{figure*}[t]
\centering
\includegraphics[width=\textwidth]{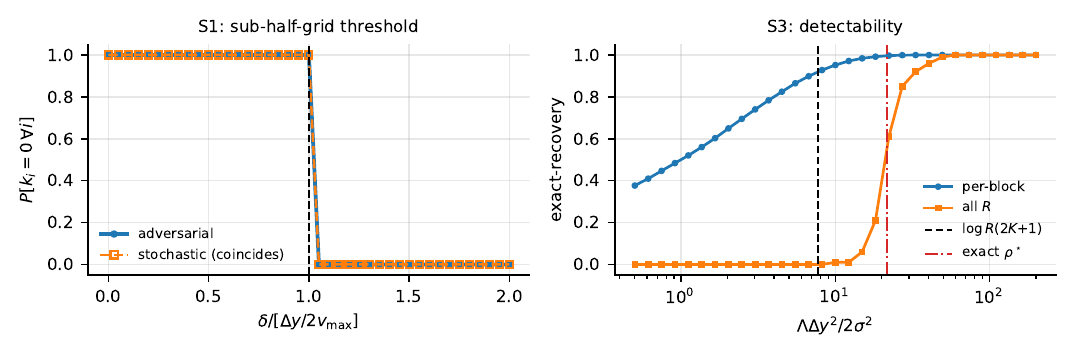}
\caption{Synthetic studies S1 and S3 (Algorithm~A; $\dy{=}1\,\mu$m, $\vmax{=}200$\,mm/s; the S2
cadence study is Supplementary Fig.~S1). \textbf{S1:}
the probability that every block is correctly phased steps from $1$ to $0$ exactly at
$\delta=\dy/(2\vmax)$ (empirical transition within one grid step of the predicted $2.5\,\mu$s;
zero below-threshold failures over $200$ blocks $\times\,100$ seeds), confirming
Theorem~\ref{thm:threshold}; the adversarial and stochastic curves \emph{coincide} (both step at the
same half-cell threshold), so the stochastic curve is drawn dashed with open markers over the solid
adversarial one. \textbf{S3:} the all-block exact-recovery knee sits at $\Lambda\dy^2/(2\sigma^2)\approx22.2$,
matching the exact-Gaussian union-bound prediction $\rho^\star\approx21.7$ (the root of
$R\cdot2K_{\max}\,\Phi(-\sqrt{\rho^\star/2})=1$) to within $3\%$---confirming \eqref{eq:c32} with
its exact constant; $\log R(2K_{\max}{+}1)\approx7.7$ is the order-only prediction.}
\label{fig:synth1}
\end{figure*}

\begin{figure*}[t]
\centering
\includegraphics[width=\textwidth]{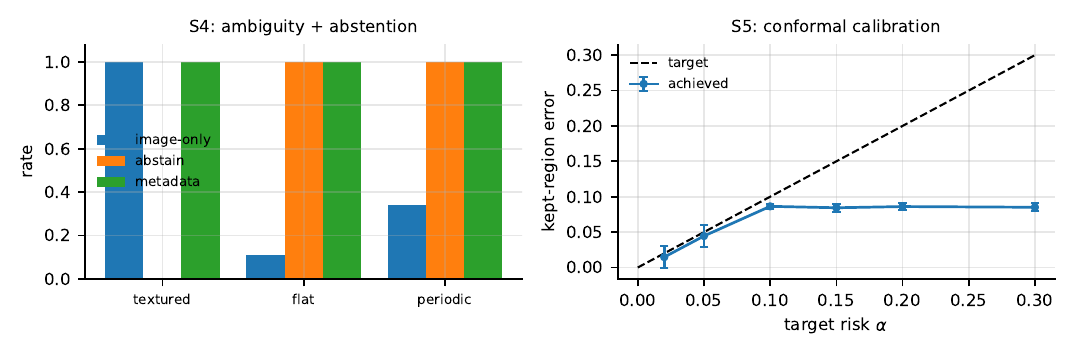}
\caption{Synthetic studies S4--S5. \textbf{S4:} on textured bands
the image channel decodes exactly and the certificate keeps; on texture-free and $p$-periodic
bands (Proposition~\ref{prop:ambig}) the image channel fails ($0.11$/$0.34$ accuracy) and the
certificate abstains ($100\%$), while the metadata channel stays exact everywhere. \textbf{S5:}
the conformally calibrated kept-region error stays at or below the target risk $\alpha$ across
$20$ splits (Proposition~\ref{prop:cert}). The image-restoration reliability evaluation (recovery
vs.\ fabrication vs.\ abstention across baselines) is Fig.~\ref{fig:recovery}.}
\label{fig:synth2}
\end{figure*}

\paragraph{Image-restoration reliability against baselines.}
This is an image-restoration reliability test: which restorers recover identifiable lines, and which
fabricate on the provably-unidentifiable band. On a corrupted image (AR(1) offsets, a catastrophic
straddle, and both texture-free and
$p$-periodic unrecoverable bands), the metadata channel is exact and the image-only ML decoder and its
certificate-gated version recover every identifiable block \emph{exactly} (identifiable-band
RMSE $=0$, a confirmation of the margin-driven recovery of Proposition~\ref{prop:signal}), versus
$43\%$ for rigid registration and column-mean destriping and $84\%$/$33\%$ for two added data-driven
restoration baselines (learned-template and low-rank PCA). The anti-hallucination result: on the
provably-unidentifiable $p$-periodic band \emph{every} offset-estimating method without an abstention
certificate fabricates structure (rigid $0.98$, image-only ML $0.88$, learned-template $0.70$,
smooth-prior $0.63$, PCA $0.18$), whereas only the certificate (and the metadata oracle) abstain
($100\%$)---while still recovering the identifiable rows exactly. Column-mean destriping and low-rank
PCA fabricate little only because they are largely \emph{inert} (they apply no shift or average it
out), \emph{not} because they knowingly decline. The mechanism is objective-driven, not
architecture-specific: any method that optimizes reconstruction accuracy fabricates a shift on the
unidentifiable band, and a deep network trained to the same objective is expected to fabricate at least
as confidently; a controlled deep baseline is out of scope here because it would require domain
retraining, which confounds out-of-distribution failure with the fabrication effect we isolate.
Table~\ref{tab:recovery} summarizes the comparison; the full table with methods and the aggregate/
identifiable-band RMSE columns is in the Supplementary Material (Table~S1).\footnote{The
Supplementary Material also contains the proof of Proposition~\ref{prop:stoch} (Sec.~S-B) and the
June investigation record (Sec.~S-C).}

\begin{table}[t]
\centering
\caption{\textbf{Image-restoration reliability} on the synthetic benchmark (gated;
Fig.~\ref{fig:recovery}; full version Supplementary Table~S1). \emph{exact}: exact-recovery over
the identifiable band; \emph{fab.}: fraction of $p$-periodic-band lines given a nonzero \emph{wrong}
shift; \emph{abst.}: abstention coverage on the texture-free/periodic bands. Destriping/PCA fabricate
little only because they are \emph{inert}, \emph{not} certified abstention. $^\star$ ours;
$^\dagger$ added data-driven baselines.}
\label{tab:recovery}
\small
\begin{tabular}{lccc}
\hline
Method & exact (ident.) & fab.\ (per.) & abst.\ (per.) \\
\hline
metadata-only (oracle)              & $1.00$ & $0.00$ & $0.00$ \\
image-only ML$^\star$               & $1.00$ & $0.88$ & $0.00$ \\
image$+$certificate$^\star$         & $1.00$ & $\mathbf{0.00}$ & $\mathbf{1.00}$ \\
rigid registration                  & $0.43$ & $0.98$ & $0.00$ \\
destriping (col-mean)               & $0.43$ & $0.00$ & $0.00$ \\
smooth-prior                        & $0.67$ & $0.63$ & $0.00$ \\
learned-template$^\dagger$          & $0.84$ & $0.70$ & $0.00$ \\
low-rank (PCA)$^\dagger$            & $0.33$ & $0.18$ & $0.00$ \\
\hline
\end{tabular}
\end{table}

\begin{figure*}[t]
\centering
\includegraphics[width=\textwidth]{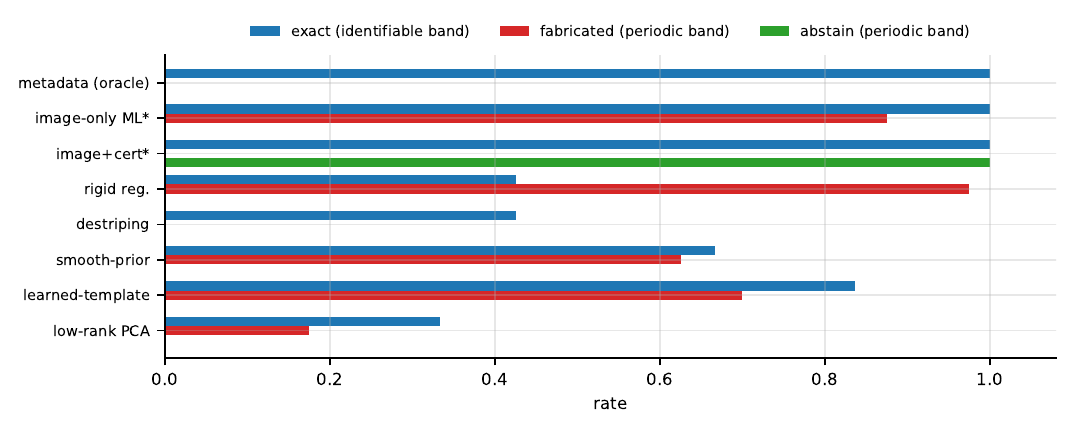}
\caption{\textbf{Image-restoration reliability} on the synthetic benchmark
(Table~\ref{tab:recovery}; full version Supplementary Table~S1). Per restorer: exact-recovery rate on the
identifiable band, \emph{fabrication} rate on the provably-unidentifiable $p$-periodic band, and
abstention coverage on that band. Every restorer without an abstention certificate---rigid
registration, destriping, smooth-prior, and the two data-driven baselines (learned-template,
low-rank PCA)---fabricates structure on the periodic band, whereas the certified method
(\texttt{image+cert}) abstains ($1.00$) and still recovers the identifiable band exactly. Starred
rows ($^\star$) are ours; the low fabrication of destriping/PCA reflects \emph{inertness} (they apply
no shift or average it out), not certified abstention.}
\label{fig:recovery}
\end{figure*}

\subsection{What transfers across modalities}\label{sec:crossmodal}
Re-scoring public data with the paper's own certificate---rolling-shutter registration (TUM, Carla-RS,
Fastec-RS \cite{TUMRollingShutter,LiuCVPR2020}) and cryo-EM SO(2) multireference alignment
\cite{CryoEMMRA,BandeiraMRA}, Supplementary Table~S2---shows the certified-abstention mechanism
carries into every reference-perturbed modality tested: a certificate-free estimator fabricates a
confident position on the unidentifiable regime in every case, whereas the certified method declines.
This follows because the impossibility (Proposition~\ref{prop:ambig}) is distribution-free. What is
modality-specific is how completely it abstains and how tightly it controls kept-region risk.
Scanning holds fully (abstains on essentially all unidentifiable rows; conformal target met).
Rolling-shutter abstains but coverage is loose: the achievable kept-region risk floors at $0.17$--$0.46$,
above the $\alpha{=}0.10$ target, from heavy image-matching tails. Cryo-EM at extreme SNR
\emph{under-abstains}: the confidence certificate flags only $0.39$ of a regime that is
$\approx\!0.79$ unidentifiable, and the kept blocks remain nearly as erroneous as uncertified---an honest
limitation of the confidence rule as SNR$\,\to\,0$. Thus abstention \emph{soundness} is a structural
property that carries, while completeness and coverage tightness are statistical and modality-specific;
no single cross-modal invariant holds. The gated per-modality numbers are in
the Supplementary Material (Table~S2).

\paragraph{Reproducibility.}
Three classes, by what is true at build time. \emph{(i) Synthetic} --- the simulator
(\texttt{scripts/tipkit.py}), study drivers (\texttt{run\_S1..S5.py}, \texttt{run\_recovery.py},
\texttt{make\_figs.py}) and unit tests are released; every synthetic number and figure (the recovery
table is in the Supplementary Material) regenerates from committed
\texttt{data/\{raw,analyze\}\_data/sim/} by one command, gated by \texttt{check\_numbers.py}.
\emph{(ii) July position-only capture} --- Table~\ref{tab:rig}
and Fig.~\ref{fig:khist} are derived from the \emph{committed} raw log at
\texttt{data/logs/2026-07-03\_before/} via
\texttt{extract\_dphase.py}/\texttt{make\_rig\_figs.py}, and are gate-checked;
Fig.~\ref{fig:beforeafter} is an
image proxy rendered from the committed C-scan CSVs (not logged). \emph{(iii) June
investigation record} --- the \textbf{[I]}-tagged rows of Supplementary Tables~S3 and~S5
(Supplementary Material, Sec.~S-C); their raw logs are not retained and are the
subject of the re-capture of Section~\ref{sec:real}. Nothing outside class~(i)/(ii) is claimed as
deposited.

\section{Discussion}\label{sec:disc}

\paragraph{What the results say together.}
Grid-locked triggering fixes \emph{where} samples land (L1) but not \emph{which} crossing becomes
each block's origin (L2): the two are logically independent (Theorem~\ref{thm:sep}), and no
position-only policy can close the gap under adversarial re-initialization
(Theorem~\ref{thm:imposs}). What does close it is a single, quantifiable resource. An
origin-coupled primitive $P^\star$ that arms before the origin and pins record~$0$ to it removes
the offset exactly (Theorem~\ref{thm:suff}); an \emph{imperfect} reference still succeeds, but
only if it resolves the phase to within half a grid cell at the fastest speed,
$\delta<\dy/(2\vmax)$ (Theorem~\ref{thm:threshold}). The same half-cell constant reappears when
the offset is recovered after the fact---from timestamps (Corollary~\ref{cor:metathr}) or from
image texture above the Robinson--Milanfar limit (Proposition~\ref{prop:signal})---so
Section~\ref{sec:halfcell} presents it as a single half-cell condition shared by the three channels.

\paragraph{Design rule: pay the half-cell margin in one channel.}
Pinning a block's phase costs one resolved half-cell, and that cost must be paid by exactly one of three
channels---at the trigger (a synchronization reference, prevention), in the metadata (a sub-interval
timestamp, exact recovery), or in the image texture (a shift estimate resolving half a column,
statistical recovery). This is a design heuristic \emph{within the studied setting}, not a universal
conservation law. Quantitatively, a fixed trigger-to-capture gap $\delta$ displaces the origin by
$\Phi=\delta v/\dy$ cells, so the half-cell condition $\Phi<1/2$ is a \emph{speed--pitch fidelity
envelope}
\begin{equation}\label{eq:envelope}
v<\frac{\dy}{2\delta}\qquad\Longleftrightarrow\qquad \dy>2\delta v,
\end{equation}
a maximum faithful speed for a target pitch; with the calibrated $\delta\approx110\,\mu$s
(Section~\ref{sec:val}) the $5\,\mu$m/$390$\,mm/s capture sits far outside it, consistent with the large
measured offset, so the operating rule is to keep $\delta v<\dy/2$. A pipeline that pays the margin in no
channel leaves the phase unrecoverable---and Proposition~\ref{prop:ambig} shows the texture-free and
periodic bands are exactly where the texture option is unavailable, which is why the certificate
(Section~\ref{sec:cert}) abstains there rather than fabricate.

\paragraph{The counterintuitive cadence lesson.}
Tightening the block count to \emph{exactly} the number of imaged columns---the natural way to
remove the surplus that hides the offset---does not help; it \emph{induces} a subharmonic capture
cadence under any re-initialization latency (Corollary~\ref{cor:subharm}), degrading throughput to
one row per bidirectional cycle in the worst case. Surplus re-hides the offset; only the
origin-coupled handshake fixes both the offset and the cadence. This is a concrete warning for
triggered-DAQ raster design.

\paragraph{Instantiation on a real system.}
Section~\ref{sec:val} exhibits the phenomenon on a position-triggered laser-ultrasound C-scanner
(on-grid spacing with nonzero per-block offsets, the catastrophic straddle, the subharmonic
cadence). The logged configuration is a position-only acquisition policy, and it shows a nonzero
per-block offset on almost every block---an empirical instance of the failure
Theorem~\ref{thm:imposs} predicts for position-only policies. The offset is recoverable off-line
where the image is identifiable, by an image-based registration estimator with an identifiability
certificate (the image channel of Section~\ref{sec:rec}). A rate-matched, logged capture that decomposes the per-block L2 offset from
the intra-block L1 warp on the same region---the measurement flagged in Section~\ref{sec:real}---is
what would turn Fig.~\ref{fig:beforeafter} into a fully quantitative L2 claim.

\paragraph{Scope and limitations.}
The theory is confined to inter-block \emph{phase} (L2). We bracket the continuous intra-block
\emph{warp} (an L1 effect, dominant at submicron pitch on the deployed system, Supplementary Table~S5) as a
separate problem; entangling the two obscures both, and the decomposition is precisely what lets
us report each cleanly. The recovery certificate controls risk only \emph{where it does not
abstain}, and assumes exchangeability of the calibration set (recalibrate under distribution
shift); it does not identify the unrecoverable set exactly. The average-case autoregressive model
(Section~\ref{sec:stoch}) is motivated by an observed offset autocorrelation measured in a
different, large-offset regime and is used only as a modelling choice; fitting it at the operating
point is a measurement still to be made.

\paragraph{Generality of the sampler class.}
The object of study---a grid-locked block-sequential sampler (Definition~\ref{def:policy})---covers
scanning electron and transmission electron microscopy with flyback, atomic-force and
scanning-probe line scanning with re-trace, push-broom and line-scan cameras, optical-coherence
B-scan stacking, and ultrasonic/photoacoustic raster C-scanning. No result depends on the
instrument; each needs only count-locked capture, absolute-grid triggering, and a nonzero
re-initialization latency. The rolling-shutter analogy is instructive by contrast: rolling shutter
is a continuous warp at known row times, whereas the corruption here is an \emph{unknown, discrete}
relabeling of the column origin for a static scene under a known trajectory---invisible to
row-time motion models.

\paragraph{Structural versus statistical.}
Section~\ref{sec:crossmodal} draws the boundary that the certificate exposes: abstention \emph{soundness}
is a structural property, distribution-free and carrying across modalities, whereas how completely the
certificate abstains and how tightly it controls kept-region risk are statistical and modality-specific.
A single cross-modal invariant does \emph{not} hold, and we make no universal-law claim.

\paragraph{Certified abstention and generative hallucination.}
The core principle---a system should decline rather than fabricate when its input is provably
uninformative---connects to a broader concern: hallucination in generative models, where image
restorers and language models alike synthesize confident but unsupported output~\cite{Bhadra2021}. Our certificate is an
existence proof that, where an identifiability structure is computable, fabrication can be provably
prevented rather than merely reduced. The mechanism here derives from the likelihood surface of a
well-posed estimation problem; generative-model confabulation lacks an equivalent computable
identifiability surface, so the mechanism does not transfer directly. We therefore frame this not as a
solution to hallucination but as evidence that certified abstention under provable unidentifiability is
achievable in at least one broad class of problems \cite{ElYanivWiener2010,AngelopoulosI2I2022}, and as
an invitation to identify computable identifiability structures in others.

\section{Conclusion}\label{sec:concl}

Absolute-position triggering does not, by itself, produce a spatially faithful pixel array: it
guarantees intra-block position (L1) but not inter-block phase (L2), and the two are independent
(Theorem~\ref{thm:sep}). No position-only policy can restore L2 (Theorem~\ref{thm:imposs}); an
origin-coupled synchronization primitive is necessary and sufficient to guarantee L2 at acquisition
within the count-locked policy class of Definition~\ref{def:policy} (Theorem~\ref{thm:suff}), and
an imperfect reference suffices iff it resolves half a grid cell at the fastest speed
(Theorem~\ref{thm:threshold}). When phase fidelity is broken, the offsets are exactly recoverable
from metadata, conditionally recoverable from image texture above an explicit detectability
threshold, and provably ambiguous in texture-free and periodic regions, where a distribution-free
certificate abstains rather than hallucinate. A single half-cell condition governs prevention
and both recovery channels in this setting. On a deployed laser-ultrasound C-scanner we measure the predicted failure
directly---a rigid $17.3$-column bidirectional index offset---and the speed--pitch envelope
$v<\dy/(2\delta)$ turns the half-cell condition into an operating rule: keep $\delta v<\dy/2$, or
supply a sub-half-cell reference. The contract and its separation give designers of triggered raster
inspection systems a precise target---resolve the phase cell somewhere in the pipeline---and a clear
account of where, provably, it cannot be recovered at all.

\section*{Data availability}
A complete reproducibility package is released with the paper: the raw C-scan arrays and per-record
hardware-timestamp logs of the July operating point (the \textbf{[J]} data), the synthetic-study
inputs, the simulator, all analysis and figure-generation scripts, unit tests, and the derived data.
Every figure and every reported \textbf{[J]} and synthetic number regenerates from these committed
files by a single command and is checked by an automated numeric gate. The one exception is the June
investigation record (\textbf{[I]}): its raw logs are not retained, so those values are reported as
investigation-record and are the subject of a logged re-capture (Section~\ref{sec:real}). Raw data
and code are available in the released package, archived at Zenodo
(DOI:~10.5281/zenodo.21502909), and from the authors on reasonable request.

\section*{Declaration of generative AI and AI-assisted technologies in the manuscript preparation process}
During the preparation of this work the authors used a generative AI assistant to help draft and edit
prose and to develop analysis and figure-generation scripts. After using this tool, the authors
reviewed and edited the content as needed and take full responsibility for the content of the
publication.

\bibliographystyle{IEEEtran}
\bibliography{refs}

\end{document}